\pgfplotsset{compat=1.18}
\theoremstyle{plain}
\newtheorem{theorem}{Theorem}[section]
\newtheorem{proposition}[theorem]{Proposition}
\newtheorem{lemma}[theorem]{Lemma}
\newtheorem{corollary}[theorem]{Corollary}
\theoremstyle{definition}
\newtheorem{definition}[theorem]{Definition}
\theoremstyle{remark}
\newif\ifdraft
 \newcommand{\pf}[1]{{\color{red}{#1}}}
 \newcommand{\PF}[1]{{\color{red}{{\bf PF}: \bf #1}}}
 \newcommand{\ef}[1]{{\color{blue}{#1}}}
 \newcommand{\EF}[1]{{\color{blue}{{\bf EF: #1}}}}
 \newcommand{\CH}[1]{{\color{cyan}{pb: #1}}}
 \newcommand{\AS}[1]{{\color{cyan}{AS: #1}}}
 \newcommand{\TODO}[1]{{\textcolor{red} {TODO: #1}}}
 \newcommand{\ONGOING}[1]{{\textcolor{orange}  {ONGOING: #1}}}
 \newcommand{\pf}[1]{#1}
 \newcommand{\ef}[1]{#1}
 \newcommand{\PF}[1]{}
 \newcommand{\EF}[1]{}
 \newcommand{\CH}[1]{}
  \newcommand{\AS}[1]{}
  \newcommand{\TODO}[1]{}
  \newcommand{\ONGOING}[1]{{}
\fi

\renewcommand{\comment}[1]{}
\newcommand{\parag}[1]{\vspace{-0.9mm}\paragraph{#1}}
\newcommand{\sparag}[1]{\vspace{-0.9mm}\subparagraph{#1}}
\newcommand{\subsec}[1]{\vspace{-0.4mm}\subsection{#1}}

%
%

%
%
\newcommand{\figy}[2]{\includegraphics[height=#1,keepaspectratio]{#2}}
\newcommand{\figx}[2]{\includegraphics[width=#1,keepaspectratio]{#2}}

%
%
\DeclareMathOperator*{\argmin}{arg\,min}

\newcommand{\mL}[0]{\mathcal{L}}
\newcommand{\mM}[0]{\mathcal{M}}
\newcommand{\mR}[0]{\mathcal{R}}
\newcommand{\mS}[0]{\mathcal{S}}
\newcommand{\mZ}[0]{\mathcal{Z}}

\newcommand{\bA}[0]{\mathbf{A}}
\newcommand{\bc}[0]{\mathbf{c}}
\newcommand{\bl}[0]{\mathbf{\Lambda}}
\newcommand{\bt}[0]{\mathbf{\Theta}}
\newcommand{\bC}[0]{\mathbf{C}}
\newcommand{\bI}[0]{\mathbf{I}}
\newcommand{\bQ}[0]{\mathbf{Q}}
\newcommand{\bR}[0]{\mathbf{R}}
\newcommand{\bu}[0]{\mathbf{u}}
\newcommand{\bU}[0]{\mathbf{U}}
\newcommand{\bm}[0]{\mathbf{m}}
\newcommand{\bp}[0]{\mathbf{p}}
\newcommand{\bv}[0]{\mathbf{v}}
\newcommand{\bX}[0]{\mathbf{X}}
\newcommand{\bx}[0]{\mathbf{x}}
\newcommand{\by}[0]{\mathbf{y}}
\newcommand{\bY}[0]{\mathbf{Y}}
\newcommand{\bw}[0]{\mathbf{w}}
\newcommand{\bW}[0]{\mathbf{W}}
\newcommand{\bz}[0]{\mathbf{z}}
\newcommand{\bZ}[0]{\mathbf{Z}}

\newcommand{\ent}{\mathbb{H}}
\newcommand{\E}{\mathbb{E}}
\newcommand{\C}{\mathbb{C}}
\newcommand{\var}{\mathbb{V}}
\newcommand{\x}{\mathbf{x}}
\newcommand{\weight}{\boldsymbol{\theta}}
\newcommand{\bs}{\mathbf}
\renewcommand{\deg}{\mathsf{deg}}
\newcommand{\inspace}{\mathbb R^{c_\text{in}}}
\newcommand{\outspace}{\mathbb R^{c_\text{out}}}
\newcommand{\R}{\mathbb{R}}
\renewcommand{\d}{\mathrm{d}}
\newcommand{\cov}{\mathrm{cov}}
\newcommand{\prob}{\mathbb{P}}
\newcommand{\inv}{^{-1}}

\renewcommand{\mathsf}[1]{\textsf{#1}} 
\hypersetup{
    colorlinks=true,
    linkcolor=RoyalBlue,
    filecolor=RoyalBlue,      
    urlcolor=RoyalBlue,
    citecolor=RoyalBlue,
    pdftitle={Bibliographic Report},
    pdfpagemode=FullScreen,
    }

\usepackage[textsize=tiny]{todonotes}

\icmltitlerunning{Do you understand epistemic uncertainty? Think \textit{again}!}


\begin{document}

\twocolumn[
\icmltitle{Do You Understand Epistemic Uncertainty? Think \textit{Again}!\\Rigorous Frequentist Epistemic Uncertainty Estimation in Regression}



\icmlsetsymbol{equal}{*}

\begin{icmlauthorlist}
\icmlauthor{Enrico Foglia}{isae,sher}
\icmlauthor{Benjamin Bobbia}{isae}
\icmlauthor{Nikita Durasov}{epfl}
\icmlauthor{Michael Bauerheim}{isae}
\icmlauthor{Pascal Fua}{epfl}
\icmlauthor{Stephane Moreau}{sher}
\icmlauthor{Thierry Jardin}{isae}

\end{icmlauthorlist}

\icmlaffiliation{isae}{Department of Aerodynamics, Energetics and Propulsion,, Institut Sup\'erieur de l’A\'eronautique et de l’Espace, Toulouse, France}
\icmlaffiliation{epfl}{Computer Vision Laboratory, Ecole Polyt\'echnique Fed\'erale de Lausanne, Lausanne, Switzerland}
\icmlaffiliation{sher}{Department of Mechanical Engineering, Universit\'e de Sherbrooke, Sherbrooke, Canada}

\icmlcorrespondingauthor{Enrico Foglia}{enrico.foglia@isae-supaero.fr}

\icmlkeywords{Machine Learning, ICML}

\vskip 0.3in]



\printAffiliationsAndNotice{\icmlEqualContribution} 

\begin{abstract}
Quantifying model uncertainty is critical for understanding prediction reliability, yet distinguishing between aleatoric and epistemic uncertainty remains challenging. We extend recent work from classification to regression to provide a novel frequentist approach to epistemic and aleatoric uncertainty estimation. We train models to generate conditional predictions by feeding their initial output back as an additional input. This method allows for a rigorous measurement of model uncertainty by observing how prediction responses change when conditioned on the model's previous answer. 
We provide a complete theoretical framework to analyze epistemic uncertainty in regression in a frequentist way, and explain how it can be exploited in practice to gauge a model's uncertainty, with minimal changes to the original architecture. 
\end{abstract}

\section{Introduction}\label{sec:intro}



\begin{figure}
    \centering
    \input{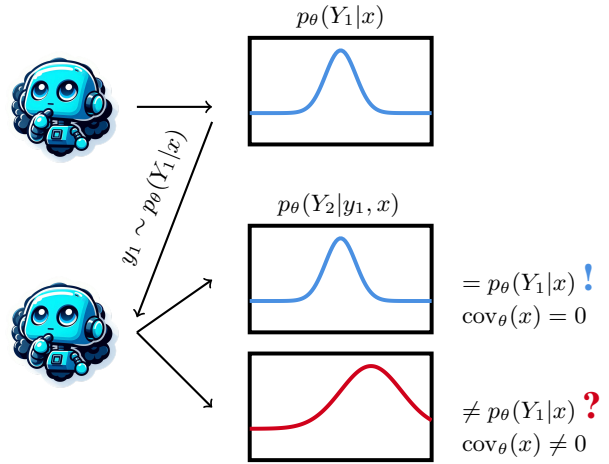}
    \caption{\small {\bf Estimating epistemic uncertainty.} (a) The model is run twice, the first time normally and the second time with the first prediction as a further input. The covariance of the two outputs can be used to quantify the epistemic uncertainty. Heuristically, it can be said that a model that double guesses its own answers presents some degree of epistemic uncertainty.}
    \label{fig:teaser}
\end{figure}

Prediction errors have two main causes. The first one is the stochasticity inherent to the data used as input (for example measurement noise, ambiguous labeling, data issued of a truly random process) and is referred to as \emph{aleatoric} uncertainty. The second is potential inaccuracies in the model used to make the predictions and is referred to as \emph{epistemic} uncertainty. These two are always present but, crucially, epistemic uncertainty can be reduced by gathering more training data.  Thus, being able to separate aleatoric and epistemic uncertainty is key to knowing when more data needs to be collected. 

Unfortunately,  recent work~\cite{bengs2023second} has shed doubts about the possibility of training models to faithfully estimate their own epistemic uncertainty, at least in a frequentist manner. However, a workaround has been proposed~\cite{johnson2024experts}, in a classification setting: if a model can be trained to give two potentially correlated responses $y_1$ and $y_2$ for every input $ x$, then a rigorous measure of epistemic uncertainty can be constructed in a frequentist manner. Practically, this can be achieved by first running the model normally and then repeating the process by adding the model answer to the input, a techinique that had already been proposed in \cite{Durasov24a}. Fig.~\ref{fig:teaser} illustrates this idea. The intuition behind this method is that a confident model will not double-guess its own answers given the new inputs, while the presence of epistemic uncertainty may induce the model to ``change its mind'' and return a different answer the second time. Thus, how much the answers change can be used as a measure of epistemic uncertainty. The main objective of our paper si to propose a general framework unifying and generalizing these approaches.

Even though it is impossible to train a model to report its epistemic uncertainty without making assumptions on the data distribution,  this roadblock can be avoided when one can construct a dataset composed of triplets $( x,y_1,y_2)$. It is important to make sure that for every input $x$, $y_1$,$y_2$ are two measurements independently sampled from the distribution $p(y| x)$. Intuitively, in this way something is now known for sure about the data distribution: it can be decomposed as $p(y_1,y_2\vert x)=p(y_1\vert x)\cdot p(y_2\vert x)$. In a recent paper, \cite{johnson2024experts} showed that this is enough to correctly gauge the epistemic state of the learner, but the scope was limited to classification. In fact, their approach cannot handle a regression problems, where outputs are real numbers, because the output space $\mathcal Y$ is continuous.

Training using more than one output per input is not standard practice in the deep learning community. However, in experimental sciences, it is common to repeat experiments more than once, or to collect long time signals from the sensors, to be able to estimate error bounds. Thus, the envisioned scenario is important in all scientific fields  where experiments can be repeated.
The following advances are proposed:  
\begin{itemize}

    \item The approach of~\cite{johnson2024experts} is extended to regression. The proposed approach is general, in the sense that it does not make hypotheses on the form of the predictive distribution, while being easy to implement as it requiring minimal changes to the model architecture.
    
    \item In concurrent research \cite{Durasov24a}, the idea of estimating the uncertainty of a model by running it once and then feeding it back its first answer as an additional input has been demonstrated with empirical success but little theoretical justification. The mathematical developments introduced in this work provide a formal grounding for this feedback-based approach, while also highlighting some of its current limitations. 

\end{itemize}

The effectiveness of our approach is demonstrated both on synthetic and experimental data (wind tunnel and anecho\"ic room measurements).  The code will be made available upon acceptance of the manuscript.

\section{Methodology}\label{sec:math}



Rigorously, epistemic uncertainty should capture the distance between the predictive distribution $p_\theta$ and the data distribution $p$. Thus, it should be formalized as a probability distribution \emph{over the space of probability distributions}. However, computing useful confidence intervals without information about the underlying distribution is impossible \cite{low1997nonparametric}, and no loss exists that incentivize the model to put forward a reliable estimation of its internal uncertainty \cite{bengs2023second}. 

In fact, the best that one can hope to achieve in the most general setting is a calibrated model:
\begin{equation}
    p_\theta(y\vert  x) \triangleq \E_{X\sim p(X\vert [ x])}[p(y\vert X)]\;,
\end{equation}
where $[ x]$ is the equivalence class of all points that the model cannot distinguish. Such models can give a reliable information about the total uncertainty, but are unable to separate it into its aleatoric and epistemic components. 

To overcome this difficulty, \cite{johnson2024experts} propose to sample the data distribution twice for each input, making sure the sample are independent. This way, the true data distribution can be factored as $p(y_1,y_2\vert  x)=p(y_1\vert  x)\cdot p(y_2\vert  x)$. This is enough to give an estimation of the epistemic uncertainty of a model trained to predict pairs $p_\theta(y_1,y_2\vert  x)$, since any correlation between the outputs (for a given $X= x$) can only be attributed to a modeling error. This suggests to use the model covariance as a measure of the epistemic uncertainty. In particular, it will be proven that:
\begin{equation}
\begin{aligned}
    \cov_\theta( x) &\triangleq \C_{Y_1,Y_2\sim p_\theta(Y_1,Y_2\vert  x)}[Y_1, Y_2]\\
    &=\var_{X\sim p(X\vert[ x])}[\E_{Y\sim p(Y\vert X)}[Y]]\;.
\end{aligned}
\end{equation}
This implies that the model covariance gives a measure of the grouping loss; i.e. the epistemic error arising from lumping together points into $[ x ]$ which should instead be distinguished. Incidentally, grouping loss is the only epistemic error present in a perfectly calibrated model.
\subsection{Formalization}
\label{sec:formal}

Let $(X,Y)\in \mathcal X\times \mathcal Y$ be random variables with joint distribution $p(X,Y)$, which we will refer to as the input and output, respectively.  Typically, $X$ characterizes the state of a physical system while $Y$ represents how the system performs while in that state and it is assumed that there is a functional relationship between one and the other. Thus, the main quantity of interest is the conditional distribution $p(Y\vert X= x)$, which represents how much uncertainty on the value of $Y$ remains after observing a specific value $ x$ of $X$. 

To fully capture this uncertainty, the functional relationship between $X$ and $Y$ must be modeled using a full probability density over $\mathcal Y$, rather than a single value.  To approximate the true posterior probability $p(Y\vert X= x)$ , a probabilistic model $p_\theta(Y\vert X= x):\mathcal X\to \Delta_\mathcal Y$ is introduced, where $\Delta_\mathcal Y$ is the space of probability density functions over $\mathcal Y$. In practice, $p_\theta$ is typically implemented using a deep network with weights  $\theta$, learned as discussed below.

Throughout the paper, expectation operators taken with respect to the predicted distribution will be denoted by $\E_{Y\sim p_\theta(Y\vert  x)}[Y]\triangleq \E_\theta[Y\vert  x]$, for notational simplicity. Similarly, expectation with respect to the data distribution will be denoted as $\E_{Y\sim p(Y\vert x)}[Y]\triangleq\E[Y\vert x]$. The same will be true for variance and covariance operators $\var$ and $\C$. The proofs of theorems stated in the remainder of this section are given in appendix~\ref{sec:proofs}.

\subsection{Calibration}
\label{sec:calibration}

Calibration is one of the main metrics used to evaluate the quality of a probabilistic model, such as $p_{\theta}$. Informally speaking, a calibrated model produces the correct distribution on average. In this context, the average is taken over all inputs that the model cannot distinguish: within this set, the model is allowed to make mistakes provided they end up canceling out at the end. A more  rigorous and widely used description is:
\begin{definition}\label{def:calibration}
    Let $[ x]$ be the equivalence class $\{ x'\vert\; p_\theta(y\vert  x')=p_\theta(y\vert x),\;\forall y\in\mathcal{Y}\}$. A model $p_\theta(y\vert  x)$ is said to be first-order distribution calibrated if: 
    \begin{align}\label{eqn:first order calibration}
        p_\theta(y\vert  x) & = \E_{p(X\vert X\in [ x])}[p(y\vert X)] \; , \nonumber \\
      & = \E[p(y\vert X)\vert [ x]] \; .
    \end{align}
\end{definition}
For all theoretical derivations, models will be assumed to be calibrated.
First-order calibration is achievable either by training on a large enough dataset or by post-hoc recalibration \cite{song2019distribution}, as long as a calibration set has been separated from the training and testing datasets. 


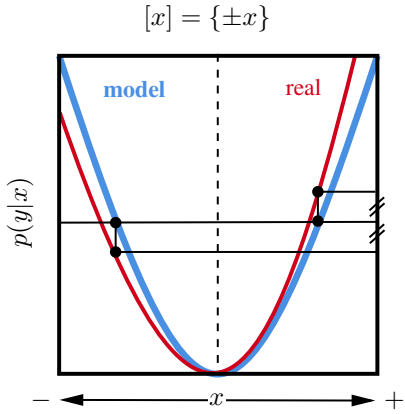
\begin{figure}[!hbt]
\begin{center}
        \tikzset{every picture/.style={line width=0.75pt}} 

\begin{tikzpicture}[x=0.75pt,y=0.75pt,yscale=-1,xscale=1]

\draw  [dashed]  (339.5,45) -- (339.5,206) ;
\draw [color={rgb, 255:red, 74; green, 144; blue, 226 }  ,draw opacity=1 ][line width=2.5]    (259.5,46) .. controls (328.5,258.5) and (349.5,262) .. (420,46) ;
\draw [color={rgb, 255:red, 208; green, 2; blue, 27 }  ,draw opacity=1][line width=1.5]    (259.8,74.6) .. controls (331.8,284.2) and (365.8,219.8) .. (408.8,45.8) ;
\draw  [line width=1.5]  (259.5,46) -- (420,46) -- (420,206.5) -- (259.5,206.5) -- cycle ;
\draw    (261.67,219.67) -- (417,220) ;
\draw [shift={(419,220)}, rotate = 180.12] [fill={rgb, 255:red, 0; green, 0; blue, 0 }  ][line width=0.08]  [draw opacity=0] (12,-3) -- (0,0) -- (12,3) -- cycle    ;
\draw [shift={(259.67,219.67)}, rotate = 0.12] [fill={rgb, 255:red, 0; green, 0; blue, 0 }  ][line width=0.08]  [draw opacity=0] (12,-3) -- (0,0) -- (12,3) -- cycle    ;
\draw  [draw opacity=0][fill={rgb, 255:red, 255; green, 255; blue, 255 }  ,fill opacity=1 ] (334.83,214) -- (343.83,214) -- (343.83,225.67) -- (334.83,225.67) -- cycle ;
\draw    (259.4,130.2) -- (420.2,129.8) ;
\draw    (288,130.2) -- (288,145) ;
\draw    (390,114.6) -- (390,129.4) ;
\draw    (288.6,145) -- (420.2,145) ;
\draw    (390.2,114.6) -- (419,114.6) ;
\filldraw  (288,130.2) circle (1.8pt);
\filldraw  (390,114.6) circle (1.8pt);
\filldraw  (288,145) circle (1.8pt);
\filldraw  (390,129.4) circle (1.8pt);

\draw    (416,137.6) -- (423.4,131) ;
\draw    (416.2,140.6) -- (423.6,134) ;
\draw    (416,124) -- (423.4,117.4) ;
\draw    (416.2,127) -- (423.6,120.4) ;

\draw (333.67,215) node [anchor=north west][inner sep=0.75pt]    {$x$};
\draw (422,214) node [anchor=north west][inner sep=0.75pt]    {$+$};
\draw (243.67,214) node [anchor=north west][inner sep=0.75pt]    {$-$};
\draw (231.73,144) node [anchor=north west][inner sep=0.75pt]  [rotate=-270]  {$p( y|x)$};
\draw (277,57) node [anchor=north west][inner sep=0.75pt]   [align=left] {{ \textcolor[rgb]{0.29,0.56,0.89}{\small \textbf{model}}}};
\draw (372.33,56) node [anchor=north west][inner sep=0.75pt]   [align=left] {{\textcolor[rgb]{0.82,0.01,0.11}{\small real}}};
\draw (300.6,18.8) node [anchor=north west][inner sep=0.75pt]    {$[x] =\{\pm x\}$};

\end{tikzpicture}
        \vspace{-2cm}
\end{center}
\caption{\small {\bf A  calibrated model can make mistakes.}  In this example, the model is symmetric around the ordinate and cannot distinguish positive from negative inputs. Even if the target distribution is asymmetric, the model can be calibrated because  errors on  the opposite sides of the $y$-axis cancel out. Since these errors are due to the model and can in principle be reduced by gathering more data, they are of an epistemic nature.
}
\label{fig:calibration}
\end{figure}

Note that a calibrated model can still make mistakes within equivalence classes, which aggregate  the points that the model \emph{cannot distinguish}. Fig.~\ref{fig:calibration} provides an example of this behavior. This observation is formalized with the following theorem:
\begin{theorem}\label{tmh: model moments}
    Let $p_\theta(y\vert  x)$ be a first-order calibrated model. Then:
        \begin{align}
            \mu_\theta( x) &\triangleq \E_{\theta}[Y\vert x] = \E[\E[Y\vert X]\vert [ x]]\\
            \sigma_\theta^2( x) &\triangleq \var_{\theta}[Y\vert x] = \E[\var[Y\vert X]\vert [ x]]+\var[\E[Y\vert X]\vert [ x]]\;. \nonumber
        \end{align}
\end{theorem}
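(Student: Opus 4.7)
The plan is to derive both identities by a direct calculation starting from the calibration identity in Definition~\ref{def:calibration}, namely $p_\theta(y\vert x) = \E[p(y\vert X)\vert [x]]$, and then recognising the resulting expressions as conditional moments via Fubini and the law of total variance.

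For the mean, I would simply compute
\begin{equation*}
    \mu_\theta(x) = \int_{\mathcal Y} y\, p_\theta(y\vert x)\, \d y
    = \int_{\mathcal Y} y\, \E[p(y\vert X)\vert [x]]\, \d y,
\end{equation*}
swap the inner expectation with the integral over $y$ (Fubini/Tonelli, assuming integrability — which is implicit in the paper's setting), and recognise $\int y\, p(y\vert X)\,\d y = \E[Y\vert X]$. This gives $\mu_\theta(x) = \E[\E[Y\vert X]\vert [x]]$ immediately. This step is essentially bookkeeping.

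For the variance, I would first apply the same Fubini argument to the second moment:
\begin{equation*}
    \E_\theta[Y^2\vert x] = \int y^2\, \E[p(y\vert X)\vert[x]]\,\d y = \E\bigl[\E[Y^2\vert X]\,\bigl|\,[x]\bigr].
\end{equation*}
Then $\sigma_\theta^2(x) = \E_\theta[Y^2\vert x] - \mu_\theta(x)^2 = \E[\E[Y^2\vert X]\vert[x]] - \E[\E[Y\vert X]\vert[x]]^2$. The key algebraic move is to add and subtract $\E[\E[Y\vert X]^2\vert[x]]$ so as to split this into
\begin{equation*}
    \underbrace{\E\bigl[\E[Y^2\vert X] - \E[Y\vert X]^2\,\bigl|\,[x]\bigr]}_{=\E[\var[Y\vert X]\vert[x]]}
    + \underbrace{\E[\E[Y\vert X]^2\vert[x]] - \E[\E[Y\vert X]\vert[x]]^2}_{=\var[\E[Y\vert X]\vert[x]]},
\end{equation*}
which is exactly the claimed decomposition. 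This is just the law of total variance applied \emph{inside} the equivalence class $[x]$, treating $\E[Y\vert X]$ and $\var[Y\vert X]$ as functions of $X$ and conditioning on $X\in[x]$.

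The only real subtlety — and what I would flag as the main obstacle — is justifying the interchange of the integral over $y$ with the conditional expectation over $X\in[x]$. Since calibration is stated as a pointwise identity between densities in $\Delta_{\mathcal Y}$, the exchange is legitimate as soon as $Y$ has a finite second moment conditionally on $X$ (so that both $\E[Y\vert X]$ and $\E[Y^2\vert X]$ are integrable with respect to $p(X\vert[x])$); this is the standard mild regularity assumption underlying the whole frequentist setup of Sec.~\ref{sec:formal}, so it can be invoked without further comment. Once that is granted, both identities fall out as above.
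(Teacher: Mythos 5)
Your proposal is correct and follows essentially the same route as the paper's own proof: Fubini to pull the $y$-integral inside the conditional expectation for both the first and second moments, then the add-and-subtract of $\E[\E^2[Y\vert X]\vert[x]]$ to obtain the total-variance decomposition. Your explicit remark on the integrability condition needed to justify the interchange is a reasonable addition that the paper leaves implicit, but it does not change the argument.
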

Thus, while the mean of a calibrated model is equal to the mean over the equivalence class $[ x]$ of the true means, the variance is the sum of the mean of the true variances and the variance of the true means. The first term is the aleatoric part that pertains to the ground truth distribution, while the second is of an epistemic nature because it stems from lumping together points that should have remained separated. 

This  refines earlier statements found in the literature that ``if a calibrated model predicts a distribution with some mean $\mu$ and variance $\sigma^2$, then it means that on average over all cases with the same prediction the mean of the target is $\mu$ and variance is $\sigma^2$''~\cite{song2019distribution}, which essentially ignores the term  $\var[\E[Y\vert X]\vert[x]]$. 
Since recalibration methods take a variance prediction $\tilde\sigma_\theta$ that is not calibrated and map it to a $\sigma_\theta=s(\tilde\sigma_\theta)$ that obeys Def.~\ref{def:calibration}, this formal imprecision has no effect on recalibration procedures.

\subsection{Epistemic Uncertainty}
\label{sec:epistemic}

To separate the total uncertainty into its aleatoric and epistemic components, models trained to predict pairs are considered. Such models shall be fitted using datasets composed of triplets $( x, y_1, y_2)$, where $y_1$ and $y_2$ are sampled iid from $p(Y\vert  x)$. Because of this, a model that is first order calibrated at predicting pairs will be of the form:
\begin{definition}\label{eqn: first order calibration pairs}
$p_\theta(y_1,y_2\vert  x)$ is first-order calibrated at predicting pairs if:
\begin{equation}
\begin{aligned}
        p_\theta(y_1,y_2\vert x) &= \E[p(y_1,y_2\vert X)\vert [ x]] \; ,\\
        &=\E[p(y_1\vert X)\cdot p(y_2\vert X)\vert [ x]] \; .
\end{aligned}
\end{equation}
\end{definition}
Note that training on pairs does not deteriorate the performance on single-output predictions, since the marginal distribution $p_\theta(y_1\vert  x)$ will remain first-order calibrated.
\begin{theorem}\label{thm:marginal calibration}
    Let $p_\theta(y_1,y_2\vert x)$ be first-order calibrated at predicting pairs. Then its marginals $p_\theta(y_1\vert x)$ and $p_\theta(y_2\vert x)$ are first order calibrated over $p(y\vert x)$.
\end{theorem}
In particular, the variance of the marginal distribution will have the same decomposition as in Thm.~\ref{tmh: model moments}. The advantage of using pairs of outputs is that now the predicted correlation between the two answers can be used as a measure of the epistemic uncertainty. More precisely:
\begin{theorem}\label{thm: model covariance}
    Let $p_\theta(y_1,y_2\vert  x)$ be first-order calibrated at predicting pairs. Then:
    \begin{equation}
        \cov_\theta( x)\triangleq\mathbb C_\theta[Y_1,Y_2\vert  x] = \var[\E[Y_1\vert X]\vert [ x]] \; .
    \end{equation}
\end{theorem}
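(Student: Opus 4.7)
The plan is to compute $\cov_\theta(x) = \E_\theta[Y_1 Y_2\vert x] - \E_\theta[Y_1\vert x]\,\E_\theta[Y_2\vert x]$ by pushing each expectation through the calibration identity of Definition 3.3. The key structural feature to exploit is that the calibrated joint density factors \emph{inside} the outer expectation over $[x]$ as $p(y_1\vert X)\cdot p(y_2\vert X)$, so that conditional on $X$ the two outputs behave as independent draws from $p(Y\vert X)$.

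First, I would handle the cross term. Starting from
\begin{equation*}
\E_\theta[Y_1 Y_2\vert x] = \iint y_1 y_2\, p_\theta(y_1,y_2\vert x)\,\d y_1 \d y_2,
\end{equation*}
I would substitute the calibration identity, exchange the outer expectation over $[x]$ with the double integral (Fubini), and use the product structure $p(y_1\vert X)p(y_2\vert X)$ to split the double integral into $\E[Y_1\vert X]\cdot\E[Y_2\vert X]$. Since both conditionals coincide with $\E[Y\vert X]$, this yields
\begin{equation*}
\E_\theta[Y_1 Y_2\vert x] = \E\!\left[\E[Y\vert X]^2\,\middle\vert\,[x]\right].
\end{equation*}

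Next, for the product of marginal means, I would invoke Theorem 3.2 (marginal calibration) to reduce $p_\theta(y_1\vert x)$ and $p_\theta(y_2\vert x)$ to first-order calibrated single-output models, and then apply Theorem 3.1 to obtain $\E_\theta[Y_1\vert x] = \E_\theta[Y_2\vert x] = \E[\E[Y\vert X]\vert [x]]$. Subtracting the square of this quantity from the cross term gives exactly the definition of the variance of the random variable $\E[Y\vert X]$ under the conditional law $p(X\vert X\in[x])$, namely $\var[\E[Y\vert X]\vert [x]]$, which is the claim.

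I do not expect a real obstacle here: the proof is essentially the observation that conditional independence given $X$ survives the calibration averaging as a conditional covariance-of-means decomposition, followed by the standard identity $\cov(U,V)=\E[UV]-\E[U]\E[V]$. The only subtlety worth stating carefully is the Fubini exchange between the $[x]$-expectation and the integrals in $y_1,y_2$, which is justified whenever the second moments of $Y$ are finite under $p(Y\vert X)$ on $[x]$ — an assumption that is implicit in even defining $\cov_\theta(x)$.
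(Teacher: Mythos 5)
Your proposal is correct and follows essentially the same route as the paper's proof: compute the cross term $\E_\theta[Y_1Y_2\vert x]$ by substituting the pair-calibration identity, applying Fubini, and using the conditional product structure to obtain $\E[\E^2[Y\vert X]\vert[x]]$, then reduce the marginal means via Theorem~\ref{thm:marginal calibration} and Theorem~\ref{tmh: model moments} and conclude by the definition of variance. No meaningful differences from the paper's argument.
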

This result is formally very similar to Thm.~\ref{tmh: model moments}, with the key difference that now, by construction, $\E[\C[Y_1,Y_2\vert X]\vert[ x]]=0$.

The epistemic uncertainty estimate in Thm.~\ref{thm: model covariance} depends on our ability to sample the distribution $p(y \vert  x)$ twice independently for each $ x$ to produce triplets of the form $( x,y_1,y_2)$. Yet, the vast majority of datasets only contain pairs of the form $( x,y)$. In this case, the model can be trained using triplets of the form $( x,y,y)$. Then, it is possible to prove the following. 

\begin{theorem}\label{thm:zigzag final theorem}
    Suppose to train a model to predict pairs, but drawing only one sample from the data distribution. In this case $p(y_1,y_2\vert x) = p(y_1\vert x)\delta(y_2-y_1) $ , making the two samples perfectly correlated. Then:
    \begin{equation}
        \cov_\theta(x) = \E[\var[Y_1\vert X]\vert[x]] + \var[\E[Y_1\vert X]\vert [x]] \; .
    \end{equation}
\end{theorem}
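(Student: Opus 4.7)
The plan is to unfold the definition of covariance directly and use the prescribed (degenerate) form of the data distribution together with the first-order calibration hypothesis on pairs. First I would write $\cov_\theta(x) = \E_\theta[Y_1 Y_2 \vert x] - \E_\theta[Y_1 \vert x]\,\E_\theta[Y_2 \vert x]$, and then plug in Definition~\ref{eqn: first order calibration pairs} with the specific choice $p(y_1,y_2\vert X) = p(y_1 \vert X)\,\delta(y_2 - y_1)$. This yields
\begin{equation}
    p_\theta(y_1,y_2\vert x) = \E\bigl[p(y_1\vert X)\,\delta(y_2-y_1) \,\bigl\vert\, [x]\bigr]\;,
\end{equation}
so every integral against $p_\theta(y_1,y_2 \vert x)$ can be swapped with an expectation over $X \sim p(X \vert [x])$ by a Fubini-style exchange.

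Next I would compute the cross moment: using the delta to collapse $y_2$ onto $y_1$, the double integral reduces to
\begin{equation}
    \E_\theta[Y_1 Y_2 \vert x] = \E\!\left[\int y_1^2\, p(y_1\vert X)\,\d y_1 \,\bigl\vert\, [x]\right] = \E\bigl[\E[Y^2\vert X]\,\bigl\vert\,[x]\bigr]\;,
\end{equation}
and then rewrite $\E[Y^2\vert X] = \var[Y\vert X] + \E[Y\vert X]^2$ to obtain $\E_\theta[Y_1 Y_2\vert x] = \E[\var[Y\vert X]\vert[x]] + \E[\E[Y\vert X]^2\vert[x]]$. For the product of marginals, I would invoke Theorem~\ref{thm:marginal calibration} (which still applies since the marginals of $p(y_1\vert X)\,\delta(y_2-y_1)$ coincide with $p(y\vert X)$) together with Theorem~\ref{tmh: model moments} to get $\E_\theta[Y_1\vert x] = \E_\theta[Y_2\vert x] = \E[\E[Y\vert X]\vert[x]]$.

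Subtracting the two quantities and recognizing $\E[\E[Y\vert X]^2\vert[x]] - \E[\E[Y\vert X]\vert[x]]^2 = \var[\E[Y\vert X]\vert[x]]$ by the definition of (conditional) variance immediately gives the claimed identity
\begin{equation}
    \cov_\theta(x) = \E[\var[Y_1\vert X]\vert[x]] + \var[\E[Y_1\vert X]\vert[x]]\;.
\end{equation}

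The only delicate point is justifying the manipulations involving the Dirac delta: the calibration identity is an equality of densities only in a distributional sense, so I would either phrase the argument via test functions (integrating $y_1 y_2$ against the joint) or, equivalently, rewrite the assumption as $(Y_1, Y_2) \vert X \stackrel{d}{=} (Y, Y)$ with $Y \sim p(\cdot\vert X)$ and work directly with the random variables. This sidesteps any measure-theoretic awkwardness and makes the delta-collapse step a trivial consequence of $Y_1 Y_2 = Y^2$ under the data-generating coupling, so I do not expect a genuine obstacle beyond this cosmetic choice.
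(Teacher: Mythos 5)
Your proposal is correct and follows essentially the same route as the paper: both unfold $\cov_\theta(x)=\E_\theta[Y_1Y_2\vert x]-\E_\theta[Y_1\vert x]\E_\theta[Y_2\vert x]$, use the Dirac delta to collapse the cross moment to $\E_\theta[Y_1^2\vert x]=\E[\E[Y^2\vert X]\vert[x]]$, and then invoke the variance decomposition of Thm.~\ref{tmh: model moments} together with calibration of the marginals. Your explicit remark that Thm.~\ref{thm:marginal calibration} still applies because the marginals of $p(y_1\vert X)\delta(y_2-y_1)$ coincide with $p(y\vert X)$ is a point the paper leaves implicit, but it does not change the argument.
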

Thus, a model trained to predict pairs trained on couples $( x, y)$ of data instead of triplets $( x,y_1,y_2)$ can only estimate the total uncertainty, but not separate the aleatoric from the epistemic. This confirms the impossibility to train a model to report its epistemic uncertainty without making any sort of hypothesis on the data generating distribution.

Computing the model covariance is straightforward. First note  that the joint distribution can be written as:
\begin{equation}\label{eqn:decomposition joint zigzag}
    p_\theta(y_1,y_2\vert x) = p_\theta(y_2\vert y_1, x)\cdot p_\theta(y_1\vert x) \; ,
\end{equation}
where the first term on the right-hand-side can be computed by feeding back its own answers to the model.  It can be shown that:

\begin{equation}\label{eqn:cov approximation}
    \cov_\theta( x) \!\simeq\! \frac{1}{M}\sum_{m=1}^M y_m\mu_\theta( x\vert y_m)- \mu_\theta^2( x),  \; y_m\sim p_\theta(y\vert  x) \;.
\end{equation}
where, since the sum is a Monte Carlo approximation of an integral, the equality is asymptotically exact for $M\to\infty$. Thus, if $p_\theta(y_2\vert y_1, x)=p_\theta(y_1\vert  x)$, then the epistemic variance is zero. In contrast, if the model second-guesses its own answers, then $p_\theta(y_2\vert y_1, x)\neq p_\theta(y_1\vert  x)$, resulting a non-zero epistemic uncertainty. In theory, the covariance can  take positive or negative values. In practice though, one is only interested in the absolute value of $\cov_\theta$, which can be used as an indicator of epistemic uncertainty.

\subsection{Training the Model}\label{sec:training}


In the classification setting, the softmax activation in the last layer of a neural network makes every classifier a probabilistic model. Since the output space $\mathcal Y$ is finite, a model of this kind can potentially produce any distribution in $\Delta_\mathcal Y$. In regression, since $\mathcal Y$ is continuous, it is not possible to predict the probability density function $p_\theta$ in full generality. Instead, one is forced to use models in a subset $\widehat{\Delta}_\mathcal Y\subset\Delta_\mathcal Y$, where the densities can be given in terms of a finite set of parameters. The most common choice is to use Gaussian distributions $p_\theta(y\vert x) =\mathcal N(y\vert\mu_\theta( x), \sigma^2_\theta( x))$, and let the output of the model be directly the mean $\mu_\theta$ and the variance $\sigma_\theta^2\geq 0$. Notice, however, that the discussion presented in this paper is not limited to this choice: the only requirement is to be able to compute $\E_\theta$ and $\var_\theta$ given any particular choice of parametric distribution.

Once the particular form of $p_\theta$ has been chosen, training a model to predict couples is not more difficult than training in the usual fashion. Indeed, the decomposition of Eq.~(\ref{eqn:decomposition joint zigzag}) allows training $p_{\theta}$ by minimizing the negative log likelihood:
\begin{align}
    \ell_\text{NLL}(y_1,y_2,p_\theta(\cdot, \cdot\vert x)) &= -\log p_\theta(y_1,y_2\vert x) \; , \\
                                                                                     &= -\log p_\theta(y_1\vert x) \! - \! \log p_\theta(y_2\vert y_1,x)   , \nonumber
\end{align}
where the first term is the first prediction and the second is the output when concatenating the ground truth to the input. This methodology is appealing because:
\begin{itemize}
    \item It requires training only one model, with minimal changes to the base architecture and the training procedure. 
    \item Unlike in MC Dropout~\cite{gal2016dropout} and similar sampling-based methods, the sampling step of the present methodology can be performed in parallel by batching all the samples $y_m$, since the weights are treated deterministically.
\end{itemize}
%

\section{Related Work}\label{sec:sota}

\subsection{Bayesian Deep Learning}

Bayesian Deep Learning is the most successful framework today to predict and analyze epistemic uncertainty in neural networks. 
The main object studied within the Bayesian framework is the so-called weight posterior:
\begin{equation}
    p(\theta\vert \mathcal D)\propto p(\theta)p(\mathcal D\vert \theta) \; ,
\end{equation}
where $p(\theta)$ is the prior on the weights $\theta$ and $p(\mathcal D\vert \theta)$ is the likelihood, i.e. the probability to observe the dataset $\mathcal D$ if the weights are set to $\theta$. The predicted distribution after training, $ p(y\vert x,\mathcal D)$, can then be given by averaging over the possible parameters as:
\begin{equation}\label{eqn:model averaging}
    p(y\vert x,\mathcal D)=\int p_\theta(y\vert x)p(\theta\vert \mathcal D)\d \theta\;,
\end{equation}
where $p_\theta(y\vert x)$ is a specific instance of the model with weights equal to a specific value of $\theta$. This leads to a variance decomposition, using the law of total variance, which is very similar to that in Thm.~\ref{tmh: model moments}:
\begin{equation}
    \var[Y\vert x,\mathcal{D}] = \underbrace{\E[\var_\theta[Y\vert x]\vert \mathcal D]}_\text{aleatoric}+\underbrace{\var[\E_\theta[Y\vert x]\vert \mathcal D]}_\text{epistemic}\;.
\end{equation}
The computational intractability related to the accurate calculation of $p(\theta\vert \mathcal{D})$ and the averaging in Eq.~(\ref{eqn:model averaging}), has led to several approximation techniques, of which the most popular and effective is Deep Ensembles (DE) \cite{lakshminarayanan2017simple, wild2024rigorous}. However, DE relies on training multiple copies of the same network, which can be extremely costly in terms of training time and memory requirement. Other techniques exist to make Bayesian modeling more accessible, but, as a general rule of thumb, the cheaper the approximation the worst the estimation accuracy. Methods like Deep Ensembles or Hamiltonian Monte Carlo \cite{izmailov2021bayesian, neal2011mcmc} sit on the expensive-accurate side of the spectrum, whereas variational inference methods like Monte Carlo Dropout \cite{gal2016dropout} (and variants of this method) or the Laplace approximation \cite{mackay1992bayesian} are easier to compute, but less precise. 

\subsection{Metrics and Calibration}

The definition of calibration of Def.~\ref{def:calibration} has been extensively used in the classification community, and has been introduced in the context of regression by \cite{song2019distribution}. Other definitions of calibration include quantile calibration and variance calibration \cite{levi2022evaluating}. While these definitions are more practical to check, the notion of distribution calibration is more suited for theoretical purposes. Also, training a model by minimizing a proper scoring loss (like the negative log likelihood) will eventually yield calibrated models \cite{gneiting2007strictly}. It can be proven that, by using a slightly stronger notion of variance calibration, Thm.~\ref{tmh: model moments} and Thm.~\ref{thm: model covariance} still hold. The discussion about such modification is deferred to appendix~\ref{sec:variance calibration}. 

\subsection{Connection to~\cite{johnson2024experts}} 

In the case of classification over $K$ classes, the output probability space $\Delta_\mathcal{Y}$ is just the $K-1$ dimensional probability simplex, so that $p_\theta(y\vert x)$ is just a vector in $\mathbb R^{K}$ the components of which are non-negative and sum up to one. Similarly, the joint probability $p_\theta(y_1,y_2\vert x)$ can be interpreted as a stochastic $K\times K$ matrix. 
To exploit this, \cite{johnson2024experts} define a {\it covariance operator} to measure the difference between the model, which may include correlations, and the expected ground truth where the outputs are uncorrelated.
\begin{definition}\label{eqn:covariance operator}
Let $p_\theta(y_1,y_2\vert x)$ be a model trained to predict pairs such that $p_\theta(y_1\vert x)$ and $p_\theta(y_2\vert x)$ are its marginal distributions. Define the covariance operator $\Sigma^\theta_{y_1,y_2}$ as: 
\begin{equation}    \Sigma^\theta_{y_1,y_2}( x) \triangleq p_\theta(y_1,y_2\vert  x)-  p_\theta(y_1\vert  x)\cdot p_\theta(y_2\vert  x)
\end{equation}
\end{definition}
As shown in~\cite{johnson2024experts}, this operator is the covariance of the probability distributions in the equivalence class $[ x]$, as stated in the following theorem:
\begin{theorem}\label{thm:covariance operator}
    If the model $p_\theta(y_1,y_2\vert  x)$ is calibrated at predicting pairs, then $\Sigma^\theta_{y_1,y_2}( x)$ is the the covariance of the true distribution $p(y\vert  x)$ in the equivalence class $[ x]$. We write:
    \begin{align}
        \Sigma^\theta_{y_1,y_2}( x) &= \C[p(y_1\vert X),p(y_2\vert X)\vert [ x]] \; , 
    \end{align}
\end{theorem}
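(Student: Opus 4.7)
The plan is to derive the identity by substituting the calibration assumption directly into the definition of the covariance operator, then recognize the resulting expression as a covariance of two random variables indexed by the random input $X$. Throughout, one keeps $y_1$ and $y_2$ fixed, so the quantities $p(y_1 \vert X)$ and $p(y_2 \vert X)$ should be thought of as real-valued random variables, random only through $X \sim p(X \vert [x])$.

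First, I would unfold the definition:
\begin{equation*}
\Sigma^\theta_{y_1,y_2}(x) = p_\theta(y_1,y_2\vert x) - p_\theta(y_1\vert x)\cdot p_\theta(y_2\vert x).
\end{equation*}
Applying Definition~\ref{eqn: first order calibration pairs} to the joint term and Theorem~\ref{thm:marginal calibration} to each marginal yields
\begin{equation*}
\Sigma^\theta_{y_1,y_2}(x) = \E[p(y_1\vert X)\cdot p(y_2\vert X)\vert [x]] - \E[p(y_1\vert X)\vert [x]]\cdot \E[p(y_2\vert X)\vert [x]].
\end{equation*}

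The final step is to read off the right-hand side as a covariance. For any two random variables $U_1, U_2$ and any conditioning event, $\C[U_1, U_2] = \E[U_1 U_2] - \E[U_1]\E[U_2]$. Setting $U_i = p(y_i\vert X)$ (for fixed $y_1, y_2$) and conditioning on $X\in [x]$ gives exactly
\begin{equation*}
\Sigma^\theta_{y_1,y_2}(x) = \C[p(y_1\vert X),\, p(y_2\vert X)\vert [x]],
\end{equation*}
which is the claimed identity.

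The main conceptual obstacle, rather than a computational one, is clarifying the dual role of the symbols $y_1, y_2$ and $X$: the $y_i$ are deterministic evaluation points, while $X$ is the source of randomness inside each equivalence class. Once this is fixed, the proof reduces to one substitution and one invocation of the algebraic definition of covariance, so no further technical work is needed.
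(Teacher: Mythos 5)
Your proposal is correct and follows essentially the same route as the paper's proof: substitute the pair-calibration definition into the joint term, use Theorem~\ref{thm:marginal calibration} for the marginals, and read off the covariance of the random variables $p(y_i\vert X)$ under $X\sim p(X\vert [x])$. The only cosmetic difference is that you invoke the already-factored form of Definition~\ref{eqn: first order calibration pairs} in one step, whereas the paper first writes $\E[p(y_1,y_2\vert X)\vert[x]]$ and then factors using the iid assumption; the content is identical.
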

This result is important because it demonstrates that a model trained to predict pairs can give reliable information about the distribution of possible distributions, and hence a measure of the epistemic uncertainty in its most general sense. In a classification problem with finite number $K$ of classes,  $\Sigma^\theta( x)$ is a $K\times K$  matrix, so that it can be evaluated explicitly. However, in regression, the covariance operator $\Sigma^\theta( x)$ would be infinite dimensional, which makes it more cumbersome to use in practice. This is the reason why this operator is never used in the present paper, but $\cov_\theta( x)$ is preferred instead. However, the epistemic uncertainty estimate of Thm.~\ref{thm: model covariance} is related to $\Sigma^\theta$ by the following proposition. 
\begin{proposition}\label{eqn:integral operator}
\begin{align}
    \cov_\theta( x)  &= \int_{\mathcal Y\times\mathcal Y}\Sigma^\theta_{y_1,y_2}( x) y_1y_2\d y_1\d y_2  \; ,
 \end{align}
\end{proposition}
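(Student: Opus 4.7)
The plan is to reduce the proposition to a direct computation by expanding the definition of $\cov_\theta(x)$ and matching terms against the definition of $\Sigma^\theta_{y_1,y_2}(x)$. I do not expect this to require any of the calibration machinery used for Theorem~\ref{thm: model covariance}; the identity is purely a rewriting of the ordinary covariance formula once the joint model distribution $p_\theta(y_1,y_2\vert x)$ is available.

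First I would write
\begin{equation}
    \cov_\theta(x) = \E_\theta[Y_1 Y_2\vert x] - \E_\theta[Y_1\vert x]\,\E_\theta[Y_2\vert x],
\end{equation}
and express each expectation as an integral against the appropriate density from the model. The first term is immediate:
\begin{equation}
    \E_\theta[Y_1 Y_2\vert x] = \int_{\mathcal Y\times\mathcal Y} y_1 y_2\, p_\theta(y_1,y_2\vert x)\,\d y_1\,\d y_2.
\end{equation}
For the product of marginals I would use Fubini's theorem to combine the two single integrals into one double integral,
\begin{equation}
    \E_\theta[Y_1\vert x]\,\E_\theta[Y_2\vert x] = \int_{\mathcal Y\times\mathcal Y} y_1 y_2\, p_\theta(y_1\vert x)\,p_\theta(y_2\vert x)\,\d y_1\,\d y_2,
\end{equation}
which is legitimate provided that $Y_1,Y_2$ have finite first moments under $p_\theta(\cdot\vert x)$ and $p_\theta(\cdot,\cdot\vert x)$; this is a mild regularity assumption that should be stated at the outset of the proof.

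Subtracting the two displays and pulling the common factor $y_1 y_2$ inside gives
\begin{equation}
    \cov_\theta(x) = \int_{\mathcal Y\times\mathcal Y} y_1 y_2\, \bigl[p_\theta(y_1,y_2\vert x) - p_\theta(y_1\vert x)\,p_\theta(y_2\vert x)\bigr]\,\d y_1\,\d y_2,
\end{equation}
and recognizing the bracket as $\Sigma^\theta_{y_1,y_2}(x)$ from Definition~\ref{eqn:covariance operator} finishes the proof. The only technical obstacle is the integrability requirement needed to justify Fubini; this is the sole point where some care is required, but it is routine under the implicit moment assumptions already carried through the paper (e.g., the existence of $\mu_\theta$ and $\sigma_\theta^2$ in Theorem~\ref{tmh: model moments}). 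No appeal to calibration is needed, which clarifies that Proposition~\ref{eqn:integral operator} is simply a change of viewpoint relating the scalar covariance to the full operator of~\cite{johnson2024experts}.
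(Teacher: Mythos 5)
Your proof is correct, but it takes a genuinely different and more elementary route than the paper's. You work forward from the definition $\cov_\theta(x)=\E_\theta[Y_1Y_2\vert x]-\E_\theta[Y_1\vert x]\E_\theta[Y_2\vert x]$, merge the three integrals, and recognize the bracket as $\Sigma^\theta_{y_1,y_2}(x)$ straight from Definition~\ref{eqn:covariance operator}; as you observe, this needs nothing beyond finite second moments and Fubini, so the identity holds for \emph{any} model predicting pairs, calibrated or not. The paper instead starts from the integral $\int_{\mathcal Y\times\mathcal Y}\Sigma^\theta_{y_1,y_2}(x)\,y_1y_2\,\d y_1\d y_2$, substitutes the calibrated representation $\Sigma^\theta_{y_1,y_2}(x)=\C[p(y_1\vert X),p(y_2\vert X)\vert[x]]$ from Theorem~\ref{thm:covariance operator}, reduces the result to $\var[\E[Y_1\vert X]\vert[x]]$, and only then identifies this with $\cov_\theta(x)$ by invoking Theorem~\ref{thm: model covariance} --- so both endpoints of the paper's chain lean on first-order calibration at predicting pairs. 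What the paper's detour buys is an explicit re-derivation that both the scalar covariance and the integrated operator equal the grouping-loss term $\var[\E[Y_1\vert X]\vert[x]]$ under calibration; what your argument buys is the sharper observation that Proposition~\ref{eqn:integral operator} itself is an unconditional bookkeeping identity, which cleanly separates it from the calibration-dependent content of Theorems~\ref{thm:covariance operator} and~\ref{thm: model covariance}. Your one caveat --- stating the integrability hypothesis needed for Fubini --- is appropriate and is implicitly assumed throughout the paper anyway.
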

%

\subsection{Connection to~\cite{Durasov24a}} 

The idea to estimate the uncertainty in a model by feeding it back its own answers was already proposed by \cite{Durasov24a}. For what concerns regression tasks, the authors propose to score a deterministic network $\hat y=f_\theta(x)$ twice, once with an uninformative constant $\hat y_1 = f_\theta(x\vert y_0)$ and a second concatenating the first answer to the input $\hat y_2=f_\theta(x\vert \hat y_1)$. They then use $u=\sqrt{(\hat y_1-\hat y_2)^2}$ as a measure of the uncertainty.

To start analyzing this methodology, let's make the hypothesis that the underlying phenomenon is itself deterministic, thus having zero aleatoric uncertainty. Then, all probability densities collapse to a Dirac delta, $p(y\vert  x) = \delta(y-f( x))$. Therefore: 
\begin{corollary}\label{thm:deterministic corollary}
Let $p_\theta(y\vert x)$ be a deterministic network, i.e. $p_\theta(y\vert x)=\delta(y-f_\theta(x))$. If the model is first order calibrated on couples $(x,f(x))$, then:
    \begin{equation}\label{eqn:deterministic corollary 1}
    \cov_\theta( x)=\var[f(X)\vert [ x]] \; ,
\end{equation}
where:
\begin{equation}\label{eqn:deterministic equation 2}
   \cov_\theta(x) = f_\theta(x)\cdot f_\theta(x\vert f_\theta)-f_\theta^2(x) \;,
\end{equation}
\end{corollary}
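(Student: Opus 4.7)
The plan is to assemble this corollary from two pieces already developed in the paper: Theorem~\ref{thm:zigzag final theorem} gives the first identity, and the Monte Carlo representation Eq.~(\ref{eqn:cov approximation}) gives the explicit formula. Both simplify dramatically under the deterministic assumption, which is the main reason a corollary is possible at all.

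For Eq.~(\ref{eqn:deterministic corollary 1}), I would instantiate Theorem~\ref{thm:zigzag final theorem} with the data distribution $p(y\vert x)=\delta(y-f(x))$. Under a Dirac conditional, $\E[Y\vert X]=f(X)$ and $\var[Y\vert X]=0$ for every $X$. Consequently $\E[\var[Y_1\vert X]\vert[x]]=0$ and $\var[\E[Y_1\vert X]\vert[x]]=\var[f(X)\vert[x]]$. Since the model is trained on couples $(x,f(x))$, Theorem~\ref{thm:zigzag final theorem} applies verbatim and yields
\begin{equation}
\cov_\theta(x) = \var[f(X)\vert[x]].
\end{equation}
This cleanly shows that in the noiseless regime the covariance isolates the grouping variance of the ground-truth function over the equivalence class $[x]$.

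For Eq.~(\ref{eqn:deterministic equation 2}), I would start from the Monte Carlo identity
\begin{equation}
\cov_\theta(x) = \lim_{M\to\infty}\frac{1}{M}\sum_{m=1}^{M} y_m\,\mu_\theta(x\vert y_m) \;-\; \mu_\theta^2(x),\qquad y_m\sim p_\theta(y\vert x),
\end{equation}
and observe that a deterministic model makes this sum trivial. Since $p_\theta(y\vert x)=\delta(y-f_\theta(x))$, every draw satisfies $y_m=f_\theta(x)$ almost surely, so $\mu_\theta(x)=f_\theta(x)$ and $\mu_\theta(x\vert y_m)=f_\theta(x\vert f_\theta(x))$ for every $m$. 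Each summand is therefore identical, the Monte Carlo approximation collapses to an equality, and I obtain
\begin{equation}
\cov_\theta(x) = f_\theta(x)\cdot f_\theta(x\vert f_\theta(x)) - f_\theta^2(x),
\end{equation}
matching the claimed expression.

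The only step that requires any real care is justifying that, under the Dirac model, the conditional expectation $\mu_\theta(x\vert y_1)$ really equals the second-pass deterministic output $f_\theta(x\vert y_1)$; this follows from the factorization in Eq.~(\ref{eqn:decomposition joint zigzag}) applied to the deterministic conditional $p_\theta(y_2\vert y_1,x)=\delta(y_2-f_\theta(x\vert y_1))$. I do not anticipate a serious obstacle, since both formulas are direct specializations of already proved statements; the main conceptual point to make explicit is that training on couples $(x,f(x))$ is compatible with the hypotheses of Theorem~\ref{thm:zigzag final theorem}, so no extra integrability or sampling assumption is needed beyond first-order calibration on those couples.
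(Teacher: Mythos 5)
Your proof is correct and follows essentially the same path as the paper's: both arguments reduce Eq.~(\ref{eqn:deterministic corollary 1}) to the observations that $\E[Y\vert X]=f(X)$ and $\var[Y\vert X]=0$ under a Dirac data distribution, and both obtain Eq.~(\ref{eqn:deterministic equation 2}) by noting that the Monte Carlo sum in Eq.~(\ref{eqn:cov approximation}) collapses because the only possible sample from $p_\theta(y\vert x)=\delta(y-f_\theta(x))$ is $f_\theta(x)$ itself. The one difference is which result you instantiate for the first identity: you apply Theorem~\ref{thm:zigzag final theorem} (the couples version, whose aleatoric term $\E[\var[Y_1\vert X]\vert[x]]$ vanishes here), whereas the paper invokes Theorem~\ref{thm: model covariance} (the triplets version) together with the remark that for a deterministic phenomenon the sampling process can only produce one outcome, so couples and triplets coincide. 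Both routes are valid and give the same answer; yours is arguably the more direct match to the stated hypothesis of calibration on couples $(x,f(x))$, while the paper's emphasizes that no genuine correlation in the data is being introduced.
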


Notice that this result does not require training on triplets because the sampling process can only produce one outcome.

This shows that the intuitive uncertainty metric $u = \vert f_\theta(x \vert f_\theta(x)) - f_\theta(x)\vert $ employed by \cite{Durasov24a} can be formally related to the theoretical covariance metric as:
\begin{equation}
    u = \left\vert\frac{\cov_\theta(x)}{f_\theta(x)}\right\vert\;,
\end{equation}

\section{Validation}\label{sec:experiments}

\subsection{Synthetic Dataset}


At first, a simple synthetic dataset is presented to convey the main ideas of the paper in a controlled setting. 

To this end, the input is sampled uniformly $x \in [-6.0,6.0]$ and the outputs $y$ are drawn from the distribution:
\begin{align}
y  = \mu(x) + & \epsilon(x) \; , \\
\mbox{with} \quad \mu(x) &= x\sin(x) \; ,  \nonumber \\
        \epsilon(x)&\sim \mathcal{N}(0,\sigma^2(x)) \; , \nonumber \\
        \sigma(x)&=1.5\exp{(-x^2/2)}\; ,  \nonumber 
\end{align}
The training set comprises 1000 samples of the form $(x, y_1, y_2)$.

This dataset is used to train a simple MLP with ReLU activation. It has two hidden layers of 64 neurons each, with dropout layers with probability $p=0.05$. Its  output is a mean $\mu_\theta$ and a variance $\sigma^2_\theta$, which is constrained to be positive by passing it through a softplus function. The model is trained for 500 epochs by minimizing the $\beta$-NLL loss~\cite{seitzer2022on} $  \ell_{\beta-\text{NLL}}(y,\mu_\theta(x),\sigma^2_\theta(x)) $. It is taken to be: 
\begin{equation}\label{eqn: beta NLL}
  \text{sg}\left[\sigma_\theta^{2\beta}\right]\left(\frac{1}{2}\log\sigma_\theta^2(x) + \frac{(y-\mu_\theta(x))^2}{2\sigma_\theta^2(x)}\right) \; ,
\end{equation}
where $\text{sg}[\cdot]$ denotes the stop-gradient operation. In other words, the argument is considered to be fixed when computing the gradient:
\begin{equation}
    \nabla_\theta\ell_{\beta-\text{NLL}} = \sigma_\theta^{2\beta}\nabla_\theta\ell_{\text{NLL}}\;,
\end{equation}
with $\beta = 0.5$, which gives a good trade-off between the prediction of $\mu_\theta$ and $\sigma^2_\theta$.



The results are given in Fig.~\ref{fig:results} (left). It can be seen that the correlation between the two answers is only large outside of the training range, whereas the aleatoric prediction presents a maximum around 0, where the data variance is high. In Fig.~\ref{fig:results} (right), the results when training only on tuples $(x_i, y_i)$ are shown: it is interesting to see that in this case the model is not able to distinguish aleatoric and epistemic uncertainty, as predicted in Thm.~\ref{thm:zigzag final theorem}, so that the covariance presents the exact same maximum around 0 as the aleatoric variance. This highlights the importance of the present theoretical results, which demand to change the training procedure in order to obtain a meaningful frequentist measure of the epistemic uncertainty. As an important side note, it must be stressed that all the theorems are valid only if the model is well calibrated enough to begin with, which means that there are no guarantees on the performance of the current methodology for out-of-distribution samples. Nevertheless, it can e argued that high values of correlation between the two responses of the model will be a sign of epistemic uncertainty (when triples are used during training) because, under no circumstances, it can derive from the true underlying distribution if $Y_1\vert X$ and $Y_2\vert X$ are iid.
\begin{figure*}
    \centering
    \includegraphics[width=0.7\textwidth]{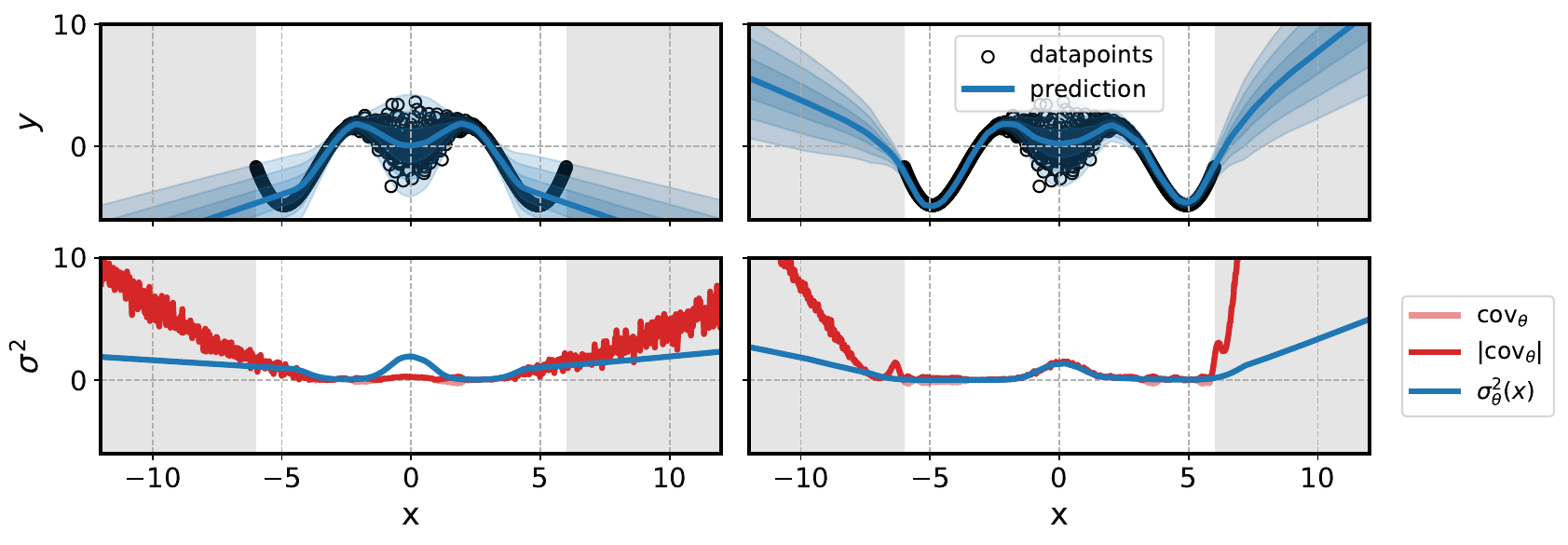}
    \caption{\small \textbf{Toy model: comparing training on couples and triplets.} Results of the simplified experiment. On the left, the model is trained on triplets $(x, y_{1}, y_{2})$, whereas on the right we only used tuples.}
    \label{fig:results}
\end{figure*}

Motivated by recent work on the disentangling of epistemic and aleatoric uncertainty \cite{mucsanyi2024benchmarking}, the methodology was tested with different levels of corruption $\sigma(x) = \gamma\exp(-x^2/2)$, with $\gamma=1, 1.5, 3$. The results are shown in Fig.~\ref{fig:different severity}. Reassuringly, the level of noise in the data does not affect the epistemic uncertainty. It is possible to argue that the aleatoric and epistemic are still very correlated far from the mode of $p(X)$, however the model variance $\sigma_\theta^2$ ceases to be very informative for out-of-distribution samples, and should mostly be discarded.
\begin{figure*}
    \centering
    \includegraphics[width=0.7\linewidth]{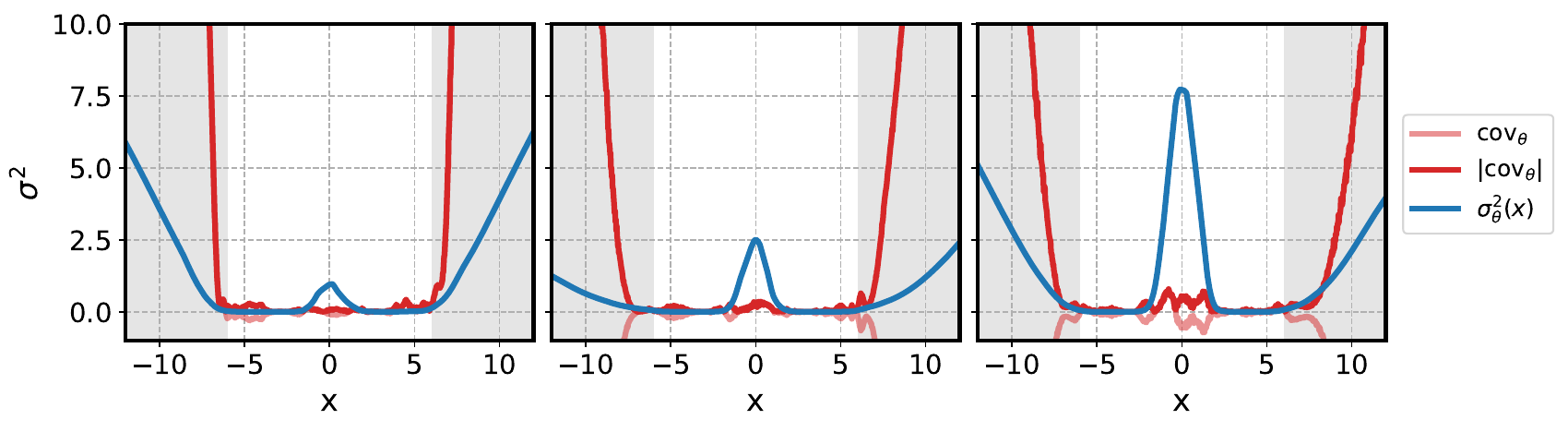}
    \caption{\small \textbf{Toy model: results for increasing data corruption.} Comparison of epistemic and aleatoric uncertainty for different levels of data corruption. The estimation of the epistemic uncertainty $\cov_\theta$ is unaffected by the level of the aleatoric component $\sigma$.}
    \label{fig:different severity}
\end{figure*}

\subsection{Aerodynamics of an airfoil}
Next, this methodology is applied to a real world dataset issued from lift and drag measurements in a low-speed wind tunnel. A model of a NACA0012 airfoil is placed in the test section, and can be rotated with respect to the incoming flow. The angle between the chord of the profile and air speed vector is called angle of attack. Also, the air-speed can be controlled, which changes significantly the types of phenomena that can be observed in the experiment.

For every flow condition, i.e. a couple $(\alpha,U_\infty)$, where $\alpha$ is the angle of attack and $U_\infty$ the inflow velocity, 10 seconds of data were collected at 1 kHz of acquisition frequency. This is standard practice to ensure statistical convergence, since each instantaneous measurements vary from one another because of various factors, like the precision of the force balances, turbulence or external disturbances. For each flow condition, 250 points were selected randomly from the time signal of the measurement to represent the first set of outcomes, and other 250 for the second set. In the training set the angle of attack was set $\alpha\in[-11^\circ,11^\circ]$ in increments of 0.5, and four flow speeds, namely $U_\infty = 7.3, 9.1, 11.9$ and $14.1$ ms$^{-1}$, were also used. For testing, the same range of angles of attack were employed, but at $U_\infty=4.7$ ms$^{-1}$. The model architecture is similar to the one used on the previous dataset. As shown in Tab.~\ref{tab:results airfoil}, the covariance is consistently bigger in the test dataset, where $U_\infty$ is smaller than any velocity seen during training, than in the train dataset. Fig.~\ref{fig:results lift 14.2} shows the predictions for one in-dataset distribution, at $U_\infty=14.2$~ms$\inv$: the epistemic uncertainty $\cov_\theta$ remains small on the entire range of $\alpha$ seen during training, and starts growing outside of this range.

\begin{table}[]
    \caption{Overview of the results of the airfoil experiment. Results are averaged over 5 runs.}
    \small
    \vskip 0.15in
    \centering
    \begin{tabular}{cccc}
     \toprule                 & Train       & Test        & Difference               \\
\midrule\multirow{2}{*}{R2}  & 0.99      & 0.83      & \multirow{2}{*}{-15.8\%} \\
                     & $\pm$0.00 & $\pm$0.04 &                          \\
\multirow{2}{*}{$\E[\vert\cov_\theta\vert]\; (\times 10^{-2})$} & 0.09        & 1.17        & \multirow{2}{*}{+1140\%} \\
                     & $\pm$0.02   & $\pm$0.98   &      \\
                     \bottomrule
    \end{tabular}
    \label{tab:results airfoil}
\end{table}

\begin{figure}
    \centering
    \includegraphics[width=0.9\linewidth]{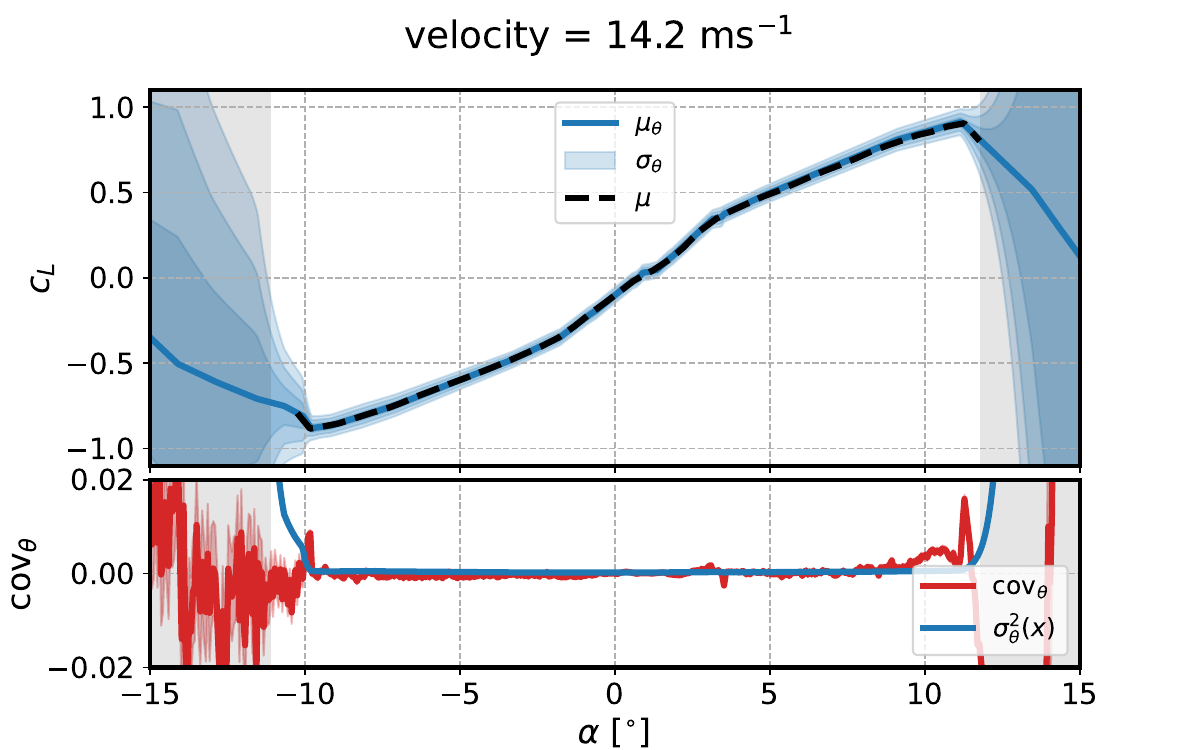}
    \caption{\small \textbf{Airfoil aerodynamics: result of in-dataset sample.} The figure shows the prediction of the model on one of the in-dataset velocities. Both the mean and the variance are well captured. Shaded areas represent the $\sigma$, $2\sigma$ and $3\sigma$ confidence intervals given by the total uncertainty $\sigma_\theta$. The epistemic covariance remains small in the range of $\alpha$ seen during training, and grows rapidly outside of it.}
    \label{fig:results lift 14.2}
\end{figure}
%

\subsection{Drone noise}
As a last dataset, the drone noise measurements performed by \cite{gojon2021experimental}, and available on the Dataverse \cite{king2007introduction}, are presented. For the purposes of the present work, only the propellers ISAE propellers with 2 to 5 blades are retained. The experimental conditions are then given by three parameters: the number of blades $n$, the rotational speed $\Omega$ and the angle of the receiving microphone with respect to the rotor plane $\vartheta$. In total, this results in 780 configurations. The predicted quantities are the amplitudes of five peaks on the noise spectrum, corresponding to the first harmonics of the blade passing frequency (BPF) $m\cdot \mathrm{BPF} = m\cdot n\Omega$, with $m=1,\dots,5$. These are quantities of interest because they make up the most disturbing components of the sound for the human ears. The raw microphone data is made up of long time recordings of the far-field acoustic pressure fluctuations. To achieve a dataset made of triplets, all time series are split in two parts which, supposing the process to be ergodic, can be considered as two realizations of the experiment. Both are then processed with a Fast Fourier Transform to extract the amplitudes of the first emerging peaks. The train-test split is performed as follows: in a first run, the test dataset is composed of all the data concerning the propeller with $n=3$ blades, whereas in a second one the testing is performed on the two-blades rotor. The results in terms of correlation coefficient and epistemic uncertainty are resumed in Tab.~\ref{tab:results drone noise}. In particular, note that the epistemic uncertainty is consistently bigger in the test set, but the gap is smaller if the held-out data is found ``in-between'' other data-points, where the model is supposed to generalize better. Fig.~\ref{fig:dual directivity 3 blades} shows a test sample for the three-bladed rotor at $\Omega=5000$ rpm, at three times the BPF, to illustrate this concept.
\begin{table}[!hbt]
    \caption{Accuracy and predicted epistemic uncertainty for the train-test split using the two and the three-bladed rotors. All results are given in standardized units and averaged over 5 runs.}
    \vskip 0.15in
    \centering
    \begin{small}
    \begin{tabular}{cccc}
        \toprule &  Train & Test & Difference \\
        \midrule \multicolumn{4}{c}{\textit{2-blades}}\\
        \midrule \multirow{2}{*}{R2 score} & 0.97 & 0.77 &  \multirow{2}{*}{-20\%}\\
        & $\pm$0.00& $\pm$0.02& \\
        \multirow{2}{*}{$\E[\vert\cov_\theta\vert]\; (\times 10^{-2})$}& 2.358 & 5.634 & \multirow{2}{*}{+139\%}\\
        &$\pm$0.11 & $\pm$0.52& \\
        \midrule \multicolumn{4}{c}{\textit{3-blades}}\\
        \midrule \multirow{2}{*}{R2 score} & 0.97 & 0.89 &  \multirow{2}{*}{-8.9\%}\\
        & $\pm$0.00 & $\pm$0.01 &\\
        \multirow{2}{*}{$\E[\vert\cov_\theta\vert]\; (\times 10^{-2})$} & 2.644&3.956 &\multirow{2}{*}{+49\%}\\
        & $\pm$0.11 & $\pm$0.49&\\
        \bottomrule
    \end{tabular}
    \end{small}
    \label{tab:results drone noise}
\end{table}

\begin{figure}
    \centering
    \includegraphics[width=0.45\textwidth]{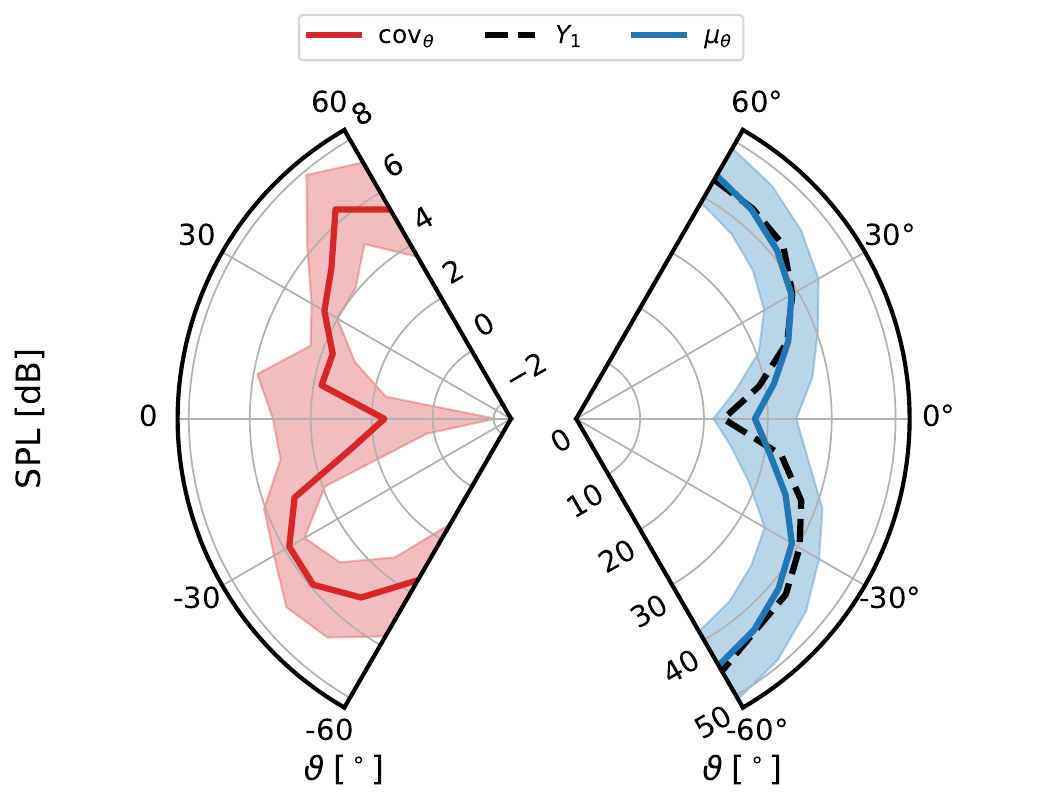}
    \caption{\small \textbf{Drone noise: results for near-out-of-dataset sample.} Test sample for the three-bladed rotor at $\Omega=5000$ rpm, at three times the BPF. The shaded areas represent the $2\sigma$ confidence interval; given by the variance of the Monte Carlo integration on the left and by the total uncertainty $\sigma_\theta^2$ on the right.}
    \label{fig:dual directivity 3 blades}
\end{figure}

\section{Conclusion}\label{sec:conclusion}
The present work establishes a mathematically rigorous approach to estimate the epistemic uncertainty of a model in a frequentist manner, for regression tasks. In particular, a perfectly calibrated model has been shown to mix aleatoric and epistemic uncertainty when predicting its variance (Thm. \ref{tmh: model moments}). By training on a dataset composed of triplets $(x, y_1,y_2)$, where $y_1$ and $y_2$ are iid, it is possible to estimate the epistemic part of the variance by using the model covariance (Thm. \ref{thm: model covariance}). This results extends previous work done in the context of classification by \cite{johnson2024experts}. Finally, a practical way to compute the model covariance by feeding the model its own predictions has been presented in Eq.~(\ref{eqn:cov approximation}), which requires minimal changes to the model architecture and training procedure.

Looking at the problem of epistemic uncertainty prediction under a frequentist lens can help to diversify the landscape of UQ methodologies, which at the moment is dominated by Bayesian approaches. While these are undeniably successful, there seems to always be a trade-off between accuracy in the prediction of the weight posterior $p(\theta\vert \mathcal D)$ and the computational cost. The present method, on the other hand, shifts the burden from the model, which is modified only slightly, to the dataset, where we require multiple outcomes to be collected for every input. While this could be an insurmountable obstacle for some practitioners, many experimental datasets are already built this way, as we showed for the aerodynamic loading and the drone noise measurements.

The requirement for the model to be calibrated in the first place can seem harsh. However the present methodology is able to detect when a model is \emph{not} calibrated, by violating the theorems proved in the case of perfect calibration, which would be difficult to do otherwise. This behavior results in high levels of $\vert \cov_\theta\vert$ when extrapolating far from the dataset. Explaining the behavior of our feedback procedure for non calibrated models could be a fruitful direction for future research.

\bibliography{ICML_2025/bibliography,bib/string,bib/learning}
\bibliographystyle{ICML_2025/icml2025}

\newpage
\appendix
\onecolumn

\section{Proofs of the theorems}\label{sec:proofs}

\begin{proof}[Proof of theorem \ref{tmh: model moments}]
    \begin{equation}\label{proof: mean first order}
        \begin{aligned}
            \E_\theta[Y\vert x] &= \int_\mathcal Y yp_\theta(y\vert x)\d y &\text{definition of $\E$}\\
            &=\int_\mathcal Y y\left(\int_\mathcal X p(y\vert x')p(x'\vert [x])\d x'\right)\d y &\text{Eq.~(\ref{eqn:first order calibration})}\\
            &=\int_\mathcal{X} p(x'\vert [x])\left(\int_\mathcal{Y}yp(y\vert x')\d y\right)\d x' &\text{Fubini's thm.}\\
            &=\E[\E[Y\vert X]\vert [x]]
        \end{aligned}
    \end{equation}
    The definition of the model variance is:
    \begin{equation}\label{eqn:var proof step 1}
        \begin{aligned}
                \var_\theta[Y\vert x] &= \E_\theta[Y^2\vert x]-\E^2_\theta[Y\vert x] &\text{definition of $\var$}\\
                &=\E[\E[Y^2\vert X]\vert [x]]-\E^2[\E[Y\vert X]\vert [x]] &\text{as for Eq.~(\ref{proof: mean first order})}\\
        \end{aligned}
    \end{equation}
    Furthermore, by the definition of variance and the linearity of expectation:
    \begin{equation}\label{eqn:var proof step 2}
            \E[\var[Y\vert X]\vert [x]] = \E[\E[Y^2\vert X]\vert [x]]-\E[\E^2[Y\vert X]\vert [x]]
    \end{equation}
    Substituting Eq.~(\ref{eqn:var proof step 2}) into Eq.~(\ref{eqn:var proof step 1}) yields:
    \begin{equation}
        \begin{aligned}
            \var_\theta[Y\vert x] &= \E[\var[Y\vert X]\vert [x]]+\E[\E^2[Y\vert X]\vert [x]]-\E^2[\E[Y\vert X]\vert [x]]\\
            &= \E[\var[Y\vert X]\vert [x]] + \var[\E[Y\vert X]\vert [x]]&\text{definition of $\var$}
        \end{aligned}
    \end{equation}
\end{proof}


\begin{proof}[Proof of theorem \ref{thm:marginal calibration}]
     \begin{equation}
        \begin{aligned}
            p_\theta(y_1\vert x)&\triangleq\int_\mathcal Y p_\theta(y_1,y_2\vert x)\d y_2&\text{definition of marginal}\\
            &=\int_\mathcal Y \left(\int_\mathcal X p(y_1\vert x')\cdot p(y_2\vert x')p(x'\vert [x])\d x'\right)\d y_2&\text{definition Eq.~(\ref{eqn: first order calibration pairs})}\\
            &=\int_\mathcal X p(y_1\vert x')\left(\int_\mathcal Y p(y_2\vert x')\d y_2\right)p(x'\vert [x])\d x'&\text{Fubini}\\
            &= \int_\mathcal X p(y_1\vert x')p(x'\vert [x])\d x'&\text{$\int_\mathcal Yp(y)\d y=1$}\\
            &=\E[p(y\vert X)\vert [x]]
        \end{aligned}
    \end{equation}
\end{proof}

\begin{proof}[Proof of Theorem \ref{thm:covariance operator}]
    The proof follows immediately from theorem \ref{thm:marginal calibration}:
    \begin{equation}
        \begin{aligned}
            \Sigma^\theta_{y,y'}(x)&\triangleq p_\theta(y,y'\vert x)-p_\theta(y\vert x)p_\theta(y'\vert x)\\
            &=\E[p(y,y'\vert X)\vert [x]]-\E[p(y\vert X)\vert [x]]\cdot\E[p(y'\vert X)\vert [x]]&\text{def. \ref{def:calibration} and thm. \ref{thm:marginal calibration}}\\
            &=\E[p(y\vert X)p(y'\vert X)\vert [x]]-\E[p(y\vert X)\vert [x]]\cdot\E[p(y'\vert X)\vert [x]]&y\vert X\text{ and }y'\vert X\text{are iid} \\
            &=\C[p(y\vert X),p(y'\vert X)\vert [x]]&\text{def. of }\C
        \end{aligned}
    \end{equation}
\end{proof}

\begin{proof}[Proof of Theorem \ref{thm: model covariance}]
    \begin{equation}
        \mathbb C_\theta[Y_1,Y_2\vert x] \triangleq \E_\theta[Y_1Y_2\vert x]-\E_\theta[Y_1\vert x]\cdot\E_\theta[Y_2\vert x]\quad\text{definition of $\mathbb C$}\\
\end{equation}
begin by the first term:
\begin{equation}\label{eqn:mean of Y1 x Y2}
    \begin{aligned}
        \E_\theta[Y_1Y_2\vert x] &= \int_{\mathcal Y\times\mathcal Y}y_1y_2p_\theta(y_1,y_2\vert x)\d y_1 \d y_2&\text{definition of $\E$}\\
        &= \int_{\mathcal Y\times\mathcal Y}y_1y_2\left( \int_\mathcal X p(y_1\vert x')p(y_2\vert x')p(x'\vert [x])\d x'\right)\d y_1 \d y_2&\text{Eq.~(\ref{eqn: first order calibration pairs})}\\
        &=\int_\mathcal X p(x'\vert [x])\left(\int_\mathcal Yy_1p(y_1\vert x')\d y_1\int_\mathcal Yy_2p(y_2\vert x')\d y_2\right)\d x'&\text{Fubini}\\
        &=\E[\E^2[Y_1\vert X]\vert [x]]&Y_1\vert X\text{ and }Y_2\vert X\text{ are iid}
    \end{aligned}
\end{equation}
    Plugging this result back:
    \begin{equation}
        \begin{aligned}
             \mathbb C_\theta[Y_1,Y_2\vert x] &= \E[\E^2[Y_1\vert X]\vert [x]]-\E_\theta[Y_1]\E_\theta[Y_2\vert x]\\
             &=\E[\E^2[Y_1\vert X]\vert [x]]-\E^2[\E[Y_1\vert X]\vert [x]]&\text{by Thm.~\ref{thm:marginal calibration}}\\
             &=\var[\E[Y_1\vert X]\vert[x]]&\text{definition of $\mathbb \var$}
        \end{aligned}
    \end{equation}
\end{proof}

\begin{proof}[Proof of corollary \ref{thm:zigzag final theorem}]
    Eq.~(\ref{eqn:deterministic corollary 1}) follows immediately from Thm.\ref{thm: model covariance} by noting that, being $p(Y\vert X) = \delta(Y-f(X))$, its average is $\E[Y\vert X] = f(X)$ and the variance is exactly zero, $\var[Y\vert X] = 0$. Eq.~(\ref{eqn:deterministic equation 2}) can be derived from Eq.~(\ref{eqn:cov approximation}) noting that one can only sample one value from a Dirac distribution, namely $f_\theta(x)$.
\end{proof} 


\begin{proof}[Proof of Theorem \ref{thm:zigzag final theorem}]
    The proof follows from the one one used for Thm.~\ref{tmh: model moments} by noticing that:
    \begin{equation}
    \begin{aligned}
         \E_\theta[Y_1Y_2] &= \int_{\mathcal Y\times\mathcal Y}y_1y_2p_\theta(y_1,y_2\vert x)\d y_1\d y_2 \\
        &= \int_{\mathcal Y\times\mathcal Y}y_1y_2\left(\int_\mathcal{X}p(y_1,y_2\vert x')p(x'\vert [x])\d x'\right)\d y_1\d y_2&\text{def.~(\ref{def:calibration})}\\
        &= \int_{\mathcal Y\times\mathcal Y}y_1y_2\left(\int_\mathcal{X}p(y_1\vert x')\delta(y_2-y_1)p(x'\vert [x])\d x'\right)\d y_1\d y_2&\text{train on couples}\\
        &=\int_\mathcal{Y}y_1^2\left(\int_{\mathcal{X}}p(y_1\vert x')p(x'\vert [x])\d x'\right)\d y_1&\text{def. of }\delta\\
        &=\int_{\mathcal{Y}}y_1^2p_\theta(y_1\vert x)\d y_1&\text{def.~(\ref{def:calibration})}\\
        &=\E_\theta[Y_1^2]
    \end{aligned}
    \end{equation}
    and that, following the same logic:
    \begin{equation}
        \E_\theta[Y_1]\cdot \E_\theta[Y_2] = \E_\theta^2[Y_1]
    \end{equation}
    
\end{proof}

\begin{proof}[Proof of Eq.~(\ref{eqn:cov approximation})]
\begin{equation*}
    \begin{aligned}
        \cov_\theta( x) & = \E_\theta[Y_1\cdot Y_2\vert  x]-\E_\theta[Y_1\vert  x]\cdot \E_\theta[Y_2\vert  x  &\text{def. of $\cov_\theta$}\\
        &= \int_{\mathcal Y\times\mathcal Y}y_1y_2p_\theta(y_1,y_2| x) \d y_1\d y_2 - \mu_\theta^2( x)  &\text{def. of $\E$}\\
        &= \int_\mathcal Y \left[\int_\mathcal Y \!  y_2p_\theta(y_2|y_1, x)\d y_2\right]  y_1p_\theta(y_1\vert  x)\d y_1  -  \mu_\theta^2( x) &\text{Eq.~(\ref{eqn:decomposition joint zigzag})} \\
        &= \int_\mathcal Y \E_\theta[Y_2\vert y_1, x]y_1p_\theta(y_1\vert  x)\d y_1- \mu_\theta^2( x)  &\text{def. of $\E$} \\
        & \simeq \frac{1}{M}\sum_{m=1}^M y_m\mu_\theta( x\vert y_m)- \mu_\theta^2( x),  \; y_m\sim p_\theta(y\vert  x) &\text{Monte Carlo}
    \end{aligned}
\end{equation*}            
\end{proof}


\begin{proof}[Proof of Eq.~(\ref{eqn:integral operator})]
    \begin{equation}
    \begin{aligned}
        \int_{\mathcal Y\times\mathcal Y}\Sigma^\theta_{y_1,y_2}(X)y_1y_2\d y_1\d y_2 &=\int_{\mathcal Y\times\mathcal Y}\C[p(y_1\vert X),p(y_2\vert X)\vert [x]]y_1y_2\d y_1\d y_2 \\
        &=\int_{\mathcal Y\times\mathcal Y}\E[p(y_1\vert X)p(y_2\vert X)\vert [x]]y_1y_2\d y_1\d y_2+&\text{def. of $\C$}\\&\quad-\int_{\mathcal Y\times\mathcal Y}\E[p(y_1\vert X)\vert[x]]\E[p(y_2\vert X\vert [x]])y_1y_2\d y_1\d y_2\\
    &=\E\left[\int_{\mathcal Y\times\mathcal Y}p(y_1\vert X)p(y_2\vert X) y_1y_2\d y_1\d y_2\bigg\vert[x]\right]+&\text{Fubini}\\&\quad-\E\left[\int_{\mathcal Y}p(y_1\vert X)y_1\d y_1\bigg\vert[x]\right]\E\left[\int_{\mathcal Y}p(y_2\vert X)y_2\d y_1\bigg\vert[x]\right]\\
    &=\E[\E[Y_1\vert X]\E[Y_2\vert X]\vert[x]]-\E[\E[Y_1\vert X]\vert[x]]\E[\E[Y_2\vert X]\vert[x]]&\text{def. of $\E$}\\
    &=\E[\E^2[Y_1\vert X]\vert[x]]-\E^2[\E[Y_1\vert X]\vert[x]]&Y_1\vert X\text{and $Y_2\vert X$ are iid}\\
    &=\var[\E[Y_1\vert X]\vert[x]]&\text{def. of $\var$}\\&=\C_\theta[Y_1,Y_2\vert x] = \cov_\theta(x)
    \end{aligned}
\end{equation}
\end{proof}
Since Thm.~\ref{thm: model covariance} relates the model covariance to the grouping loss, i.e. the variance of the averages in the equivalence class $[x]$, Chebyshev inequality can be used to get an estimation of the error on the prediction of the mean:
\begin{corollary}\label{thm:chebychev}
    Let $p_\theta(y_1,y_2\vert x)$ be a model calibrated at predicting pairs, with marginal expectation $E_\theta[Y\vert x]=\mu_\theta( x)$ and covariance $\C_\theta[Y_1,Y_2\vert x]=\cov_\theta( x)$. Also, let $\mu( x)=\E[Y\vert x]$ be the mean of the data distribution given $X= x$ and $\beta>0$ any positive real number. Then:
    \begin{equation}
        \E[(\mu(X)-\mu_\theta( x))^2\vert[ x]]=\cov_\theta( x)\;.
    \end{equation}
    Furthermore:
    \begin{equation}
        \mathbb P\left[\vert \mu_\theta( x)-\mu(X)\vert\geq \left.\sqrt{\frac{\cov_\theta( x)}{\beta}}\right\vert X\in[ x]\right]\leq \beta\;.
    \end{equation}
\end{corollary}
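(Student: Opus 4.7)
The plan is to build the corollary directly on two already-established facts: Theorem~\ref{tmh: model moments} (applied to the marginal, which is first-order calibrated by Theorem~\ref{thm:marginal calibration}), giving $\mu_\theta( x) = \E[\mu(X)\vert [ x]]$, and Theorem~\ref{thm: model covariance}, giving $\cov_\theta( x) = \var[\mu(X)\vert [ x]]$. With these in hand, the first identity is simply the definition of variance around the mean, and the second is a one-line application of Chebyshev's inequality.

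First I would prove the $L^2$ identity. Expand
\begin{equation*}
\E[(\mu(X)-\mu_\theta( x))^2\vert [ x]] = \E[\mu(X)^2\vert [ x]] - 2\mu_\theta( x)\E[\mu(X)\vert [ x]] + \mu_\theta( x)^2.
\end{equation*}
By Theorem~\ref{tmh: model moments} applied to the calibrated marginal, $\E[\mu(X)\vert [ x]] = \mu_\theta( x)$, so the last two terms collapse to $-\mu_\theta( x)^2$. Hence the right-hand side equals $\E[\mu(X)^2\vert [ x]] - \E[\mu(X)\vert [ x]]^2 = \var[\mu(X)\vert [ x]]$, which is exactly $\cov_\theta( x)$ by Theorem~\ref{thm: model covariance}.

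For the tail bound, let $Z = \mu(X)$ under the conditional law $p(X\vert X\in [ x])$. The previous step shows $\E[Z\vert [ x]] = \mu_\theta( x)$ and $\var[Z\vert [ x]] = \cov_\theta( x)$. Chebyshev's inequality, applied conditionally, gives for every $t>0$
\begin{equation*}
\mathbb P\bigl[\vert Z - \mu_\theta( x)\vert \geq t \,\bigl\vert\, X\in[ x]\bigr] \;\leq\; \frac{\cov_\theta( x)}{t^2}.
\end{equation*}
Choosing $t = \sqrt{\cov_\theta( x)/\beta}$ makes the right-hand side equal to $\beta$, which is precisely the advertised inequality.

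The proof is essentially routine once Theorems~\ref{tmh: model moments} and~\ref{thm: model covariance} are in place; there is no real obstacle. The only minor subtlety worth flagging is that Chebyshev must be applied to the \emph{conditional} distribution $p(X\vert X\in [ x])$ rather than the marginal $p(X)$, so I would state the inequality in its conditional form explicitly (which is valid because conditioning on $\{X\in[ x]\}$ yields a bona fide probability measure on $\mathcal X$). A second minor point is the degenerate case $\cov_\theta( x) = 0$: then $\mu(X) = \mu_\theta( x)$ almost surely on $[ x]$, and the bound holds trivially for every $\beta>0$ with the convention that the threshold vanishes.
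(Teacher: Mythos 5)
Your proof is correct and follows essentially the same route as the paper's: both use Theorem~\ref{tmh: model moments} to identify $\E[\mu(X)\vert[x]]$ with $\mu_\theta(x)$, Theorem~\ref{thm: model covariance} to identify $\var[\mu(X)\vert[x]]$ with $\cov_\theta(x)$, and then apply Chebyshev's inequality conditionally to $Z=\mu(X)$ on $\{X\in[x]\}$. Your explicit remarks on invoking Theorem~\ref{thm:marginal calibration} for the marginal and on the degenerate case $\cov_\theta(x)=0$ are sensible additions but do not change the argument.
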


\begin{proof}[Proof of corollary~\ref{thm:chebychev}]
    Both parts of the theorems are simple consequences of Thm.~\ref{thm: model covariance}. The first part follows from the definition of variance:
    \begin{equation}
        \begin{aligned}
            \cov_\theta( x) &= \var[\mu(X)\vert[ x]]&\text{Thm.~\ref{thm: model covariance}}\\
            &=\E[(\mu(X)-\E[\mu(X)\vert[ x]])^2\vert[ x]]&\text{def. of $\var$}\\
            &=\E[(\mu(X)-\mu_\theta( x))^2\vert[ x]]&\text{Thm.~\ref{tmh: model moments}}
        \end{aligned}
    \end{equation}
    For the second part, remembering that for any random variable $Z$ with finite variance (and expectation) $\var[Z]$
($\E[Z]$), the Chebyshev inequality yields:
    \begin{equation}
        \prob\left[\vert Z-\E[Z]\vert \geq \sqrt{\frac{\var[Z]}{\beta}}\right]\leq\beta
    \end{equation}
    Let $Z=\E[Y\vert X]=\mu(X)$. Conditioning on $X\in[ x]$ gives:
    \begin{equation}
        \begin{aligned}
            \beta&\geq\prob\left[\vert \mu(X)-\E[\mu(X)\vert[ x]]\vert\geq\left.\sqrt{\frac{\var[\mu(X)\vert[ x]]}{\beta}}\right\vert X\in[ x] \right]\\
            &=\prob\left[\vert \mu(X)-\mu_\theta( x)\vert\geq\left.\sqrt{\frac{\cov_\theta( x)}{\beta}}\right\vert X\in[ x] \right]&\text{Thm.~\ref{tmh: model moments} and Thm.~\ref{thm: model covariance}}
        \end{aligned}
    \end{equation}
\end{proof}

\section{Variance calibration and distribution calibration}\label{sec:variance calibration}

The notion of distribution calibration in regression can be, at best, difficult to check in practice. A more common notion of calibration is quantile calibration which, roughly speaking, demands that within an $x$-percent confidence interval around the predicted mean one must find $x$-percent of the true data points. While this notion of calibration is intuitive and easy to check in practice, it has the disadvantage that it averages over the entire dataset. It is possible to construct models that predict very poorly the true distribution, and yet are quantile calibrated. In \cite{levi2022evaluating}, the authors propose the notion of variance calibration as:
\begin{equation}\label{eqn:variance calibration}
    \sigma^2_\theta(x) = \E[(\mu_\theta(X)-Y)^2\vert [x]_\sigma]
\end{equation}
where the equivalence class is $[x]_\sigma = \{x'\in\mathcal X\;\vert\;\sigma_\theta^2(x')=\sigma_\theta^2(x)\}$. While this definition is stronger than quantile calibration, it has the disadvantage that it allows the model to explain away its errors using its variance. Even a model that only predicts $\mu_\theta\equiv 0$ can be perfectly calibrated in the sense of Eq.~(\ref{eqn:variance calibration}), by setting $\sigma_\theta(x) = \E[Y^2\vert [x]_\sigma]$. It seems natural, then, to include the prediction of the mean in the definition of calibration, which gives the following definition:
\begin{definition}
    A model $p_\theta(Y\vert X)$ is said to be strongly variance calibrated if its mean $\mu_\theta(x)$ and variance $\sigma_\theta^2(x)$ obey:
    \begin{align}
            \mu_\theta(x) &= \E[\E[Y\vert X]\vert [x]_p]\label{eqn:mu calibrated}\\
            \sigma^2_\theta(x) &= \E[\E[(\mu_\theta(X)-Y)^2\vert X]\vert [x]_p]
    \end{align}
    where the equivalence class aggregates all points with the same mean and variance, i.e. $[x]_p = \{x'\in\mathcal X\;\vert\;\sigma_\theta^2(x')=\sigma_\theta^2(x)\text{ and }\mu_\theta(x')=\mu_\theta(x)\}$.
\end{definition}
This definition of calibration is harder to obtain than the one used in \cite{levi2022evaluating}, because to be evaluated it requires binning over two variables, $\mu_\theta$ and $\sigma_\theta^2$. On the other hand, this definition of calibration allows to recover Thm.~\ref{tmh: model moments}:
\begin{theorem}
    Let $p_\theta(Y\vert X)$ be strongly variance calibrated. Then it holds that:
    \begin{equation}\label{eqn:variance variance calibration}
        \sigma_\theta^2(x) = \E[\var[Y\vert X]\vert [x]_p]+\var[\E[Y\vert X]\vert [x]_p]
    \end{equation}
\end{theorem}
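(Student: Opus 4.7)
The plan is to mimic the proof of Theorem~\ref{tmh: model moments} but work off the strong variance calibration identity rather than the distribution calibration identity. The key observation is that on the equivalence class $[x]_p$, the quantity $\mu_\theta(X)$ is constant and equal to $\mu_\theta(x)$, so inside the conditional expectation I can replace $\mu_\theta(X)$ by the scalar $\mu_\theta(x)$. This turns the calibration condition into a statement purely about the true moments $\E[Y\vert X]$ and $\var[Y\vert X]$, after which the standard bias/variance decomposition finishes the job.

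Concretely, I would proceed in three steps. First, expand the innermost term using $\mu_\theta(X)=\mu_\theta(x)$ on $[x]_p$ and complete the square:
\begin{equation*}
\E[(\mu_\theta(X)-Y)^2\vert X] = \var[Y\vert X] + (\E[Y\vert X]-\mu_\theta(x))^2 \; .
\end{equation*}
Second, take $\E[\cdot\vert [x]_p]$ of both sides and use linearity to split into two pieces:
\begin{equation*}
\sigma^2_\theta(x) = \E[\var[Y\vert X]\vert [x]_p] + \E[(\E[Y\vert X]-\mu_\theta(x))^2\vert [x]_p] \; .
\end{equation*}
Third, identify the second term as $\var[\E[Y\vert X]\vert [x]_p]$. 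This is where the mean-calibration condition (Eq.~(\ref{eqn:mu calibrated})) is essential: $\mu_\theta(x) = \E[\E[Y\vert X]\vert [x]_p]$ is precisely the centering constant that turns the squared deviation into a conditional variance by the definition of $\var$.

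No step is technically hard. The only subtlety, and thus the main thing to be careful about, is justifying that $\mu_\theta(X)$ may be replaced by $\mu_\theta(x)$ inside the conditional expectation in step one. This is immediate from the definition $[x]_p = \{x'\vert\sigma_\theta^2(x')=\sigma_\theta^2(x)\text{ and }\mu_\theta(x')=\mu_\theta(x)\}$, so the conditioning event fixes the value of $\mu_\theta$; I would state this explicitly in a one-line remark at the start. Everything else is the ordinary law-of-total-variance manipulation, just applied with the $[x]_p$ conditioning instead of $[x]$.
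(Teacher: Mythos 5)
Your proof is correct and takes essentially the same route as the paper: both expand the squared error $\E[(\mu_\theta(X)-Y)^2\vert X]$, exploit that $\mu_\theta$ is constant on $[x]_p$, and invoke the mean-calibration condition $\mu_\theta(x)=\E[\E[Y\vert X]\vert [x]_p]$ to recognize the second term as $\var[\E[Y\vert X]\vert [x]_p]$. The only difference is organizational — you perform the bias/variance split at the inner conditional level before averaging over $[x]_p$, whereas the paper expands all three cross terms and then adds and subtracts $\E[\E^2[Y\vert X]\vert [x]_p]$ — but the underlying argument is identical.
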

\begin{proof}
    \begin{equation}
\begin{aligned}
        \sigma_\theta^2(x)& = \E[\E[(\mu_\theta(X)-Y)^2\vert X]\vert [x]_p] \\
        &=\E[\E[\mu^2_\theta(X)\vert X]\vert [x]_p] + \E[\E[Y^2\vert X]\vert [x]_p]-2\E[\E[\mu_\theta(X)Y\vert X]\vert [x]_p]&\text{linearity of $\E$}\\
        &=\mu_\theta^2(x)+\E[\E[Y^2\vert X]\vert [x]_p]-2\mu_\theta(x)\E[\E[Y\vert X]\vert [x]_p]&\text{$\mu_\theta$ is a function of $[x]_p$}\\
        &=\E[\E[Y^2\vert X]\vert [x]_p]-\mu^2_\theta(x)&\text{Eq.~(\ref{eqn:mu calibrated})}\\
        &=\E[\E[Y^2\vert X]\vert [x]_p]-\E[\E^2[Y\vert X]\vert [x]_p]+\E[\E^2[Y\vert X]\vert [x]_p]-\mu^2_\theta(x)&\text{add and subtract}\\
        &=\E[\var[Y\vert X]\vert [x]_p]+\var[\E[Y\vert X]\vert [x]_p]&\text{def. of $\var$}
\end{aligned}
\end{equation}
\end{proof}
It is possible to extend the notion of variance calibration to models trained to predict pairs as:
\begin{definition}\label{def:covariance calibration}
   Let $p_\theta(y_1,y_2\vert x)$ be a model whose means is $\vec{\mu_\theta}(x)=[\mu_{\theta,1}(x),\mu_{\theta,2}(x)]^\top$ and whose covariance matrix is given by:
    \begin{equation}
        \Sigma_\theta(x) = \begin{bmatrix}
            \sigma_{\theta,1}^2(x) & \cov_\theta(x)\\ \cov_\theta(x) & \sigma_{\theta,2}^2(x) 
        \end{bmatrix}
    \end{equation}
    Let $\vec Z$ be a random vector defined as $\vec Z = \E[[\mu_{\theta,1}-Y_1,\; \mu_{\theta,2}-Y_2]^\top\vert X]$. The model is said to be covariance calibrated if:
    \begin{align}
        \vec{\mu_\theta}(x) &= \E[\E[[Y_1,Y_2]^\top\vert X]\vert [x]_c]\\
        \Sigma_\theta(x) &= \E[ZZ^\top\vert [x]_c] = \E\left[\E\left[\left.\begin{bmatrix}
            (\mu_{\theta,1}(X)-Y_1)^2 & (\mu_{\theta,1}(X)-Y_1)(\mu_{\theta,2}(X)-Y_2)\\
            (\mu_{\theta,1}(X)-Y_1)(\mu_{\theta,2}(X)-Y_2) & (\mu_{\theta,2}(X)-Y_2)^2
        \end{bmatrix}\right \vert X\right][x]_c\right]
    \end{align}
    where the equivalence class $[x]_c$ includes all inputs $x$ that result in the same mean vector and covariance matrix.
\end{definition}
The diagonal terms have already been analyzed in Eq.~(\ref{eqn:variance variance calibration}). Notice that, because of the calibration condition and the fact that $Y_1\vert X$ and $Y_2\vert X$ are iid, $\mu_{\theta,1}=\mu_{\theta,2}=\mu_\theta$ and $\sigma^2_{\theta,1}=\sigma^2_{\theta,2}=\sigma^2_{\theta}$: 
\begin{lemma}\label{thm:lemma covariance calibration}
    Let $p(Y_1,Y_2\vert X)$ be a model calibrated in the sense of Def.~\ref{def:covariance calibration}. If the data distribution $p(Y_1,Y_2\vert X)$ can be decomposed as $p(Y_1\vert X)p(Y_2\vert X)$, then:
    \begin{align}
        \mu_{\theta,1}(x) = \mu_{\theta,2}(x) &= \E[\E[Y_1\vert X]\vert [x]_c]\label{eqn:calibration mean 2}\\
        \sigma^2_{\theta,1}(x) = \sigma^2_{\theta,1}(x) &= \E[\var[Y_1\vert X]\vert [x]_c] + \var[\E[Y_1\vert X]\vert [x]_c]\label{eqn:calibration var 2}
    \end{align}
\end{lemma}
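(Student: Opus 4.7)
The plan is to read Def.~\ref{def:covariance calibration} component-wise for the mean condition and on the diagonal for the covariance condition, and then collapse the $i=1,2$ cases to a single statement using the iid assumption on $(Y_1,Y_2)\vert X$. Both equalities in the lemma are diagonal restrictions of Thm.~\ref{tmh: model moments}, adapted to the equivalence class $[x]_c$.

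For Eq.~(\ref{eqn:calibration mean 2}), I would extract each component of the vector identity $\vec\mu_\theta(x) = \E[\E[[Y_1,Y_2]^\top\vert X]\vert [x]_c]$, obtaining $\mu_{\theta,i}(x) = \E[\E[Y_i\vert X]\vert [x]_c]$ for $i=1,2$. Since $p(Y_1,Y_2\vert X) = p(Y_1\vert X)\, p(Y_2\vert X)$ with identical marginals, the conditional means $\E[Y_1\vert X]$ and $\E[Y_2\vert X]$ agree as functions of $X$, which immediately identifies $\mu_{\theta,1}$ with $\mu_{\theta,2}$ and pins their common value to $\E[\E[Y_1\vert X]\vert [x]_c]$.

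For Eq.~(\ref{eqn:calibration var 2}), I would read the diagonal of the matrix calibration condition to get $\sigma_{\theta,i}^2(x) = \E[\E[(\mu_{\theta,i}(X)-Y_i)^2\vert X]\vert [x]_c]$, then mimic the algebraic manipulation used for the strongly variance-calibrated case. The crucial observation is that $\mu_{\theta,i}(X)$ is constant on $[x]_c$ by construction of the equivalence class (every $x'\in[x]_c$ shares the same $\vec\mu_\theta$ and $\Sigma_\theta$), so it can be pulled out of the outer expectation and replaced by the deterministic value $\mu_{\theta,i}(x)$. Expanding the square, applying linearity, invoking the already-established mean calibration to simplify the cross term, and adding and subtracting $\E[\E^2[Y_i\vert X]\vert [x]_c]$ produces the sum of the expected within-class variance and the variance of the within-class mean. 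A second application of the iid hypothesis identifies the $i=1$ and $i=2$ variances and yields the stated decomposition.

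The only non-routine step is justifying that $\mu_{\theta,i}(X)$ may be treated as constant inside the conditional $\E[\,\cdot\,\vert [x]_c]$. This follows directly from the definition of $[x]_c$, but must be stated explicitly because the equivalence class here is defined in terms of $\vec\mu_\theta$ and $\Sigma_\theta$ rather than in terms of the full predictive distribution as in Thm.~\ref{tmh: model moments}. Beyond that bookkeeping, the argument is an algebraic replay of the single-output calibrated case, so no genuinely new ideas are required.
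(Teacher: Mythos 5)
Your proposal is correct and follows essentially the same route as the paper, whose own proof is just two sentences observing that the mean identity follows from the iid assumption on $Y_1\vert X$ and $Y_2\vert X$ and that the variance identity follows from the Thm.~\ref{tmh: model moments}-style decomposition; you simply spell out the component-wise extraction and the expand-the-square algebra that the paper leaves implicit. Your explicit remark that $\mu_{\theta,i}(X)$ is constant on $[x]_c$ is a worthwhile clarification of a step the paper glosses over, but it does not constitute a different argument.
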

\begin{proof}
    Eq.~(\ref{eqn:calibration mean 2}) follows immediately from the fact that $Y_1\vert X$ and $Y_2\vert X$ are iid. Similarly, Eq.~(\ref{eqn:calibration var 2}) follows from Thm.~\ref{tmh: model moments}.
\end{proof}
It makes sense, then, to talk about $\mu_\theta(x)$ and $\sigma_\theta^2(x)$ without specifying the index. 
It turns out that the off-diagonal terms in the covariance matrix $\Sigma_\theta(x)$, i.e. $\cov_\theta(x)$, behave according to Thm.~\ref{thm: model covariance}:
\begin{theorem}
    Under the same hypothesis of Lemma~\ref{thm:lemma covariance calibration}, the off-diagonal terms of $\Sigma_\theta(x)$ read:
    \begin{equation}
        \cov_\theta(x) = \var[\E[Y\vert X]\vert [x]_c]
    \end{equation}
\end{theorem}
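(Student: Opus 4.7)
The plan is to unpack the off-diagonal entry of the matrix identity in Definition~\ref{def:covariance calibration}, use the independence of $Y_1\vert X$ and $Y_2\vert X$ to collapse the cross-term into a squared bias, and finally recognize the outer average as a conditional variance by appealing to the mean-calibration identity of Lemma~\ref{thm:lemma covariance calibration}. The argument mirrors the proof of Thm.~\ref{thm: model covariance} but now the ``uncorrelated on average'' property is encoded by the matrix form of covariance calibration rather than by the full distributional calibration.

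Concretely, I would start from the $(1,2)$ entry of the expanded matrix form of the calibration identity in Definition~\ref{def:covariance calibration}, namely
$$\cov_\theta(x) = \E\bigl[\E[(\mu_{\theta,1}(X)-Y_1)(\mu_{\theta,2}(X)-Y_2)\vert X]\,\bigm\vert\,[x]_c\bigr].$$
Because $\mu_{\theta,1}(X)$ and $\mu_{\theta,2}(X)$ are deterministic functions of $X$, and because $Y_1\vert X$ and $Y_2\vert X$ are independent under the data-distribution assumption inherited from Lemma~\ref{thm:lemma covariance calibration}, the inner conditional expectation factorizes as
$$\E[(\mu_{\theta,1}(X)-Y_1)(\mu_{\theta,2}(X)-Y_2)\vert X] = (\mu_{\theta,1}(X)-\E[Y_1\vert X])(\mu_{\theta,2}(X)-\E[Y_2\vert X]).$$
By Lemma~\ref{thm:lemma covariance calibration} the marginal means coincide with a single function $\mu_\theta$, and the iid structure of the samples gives $\E[Y_1\vert X]=\E[Y_2\vert X]=\E[Y\vert X]$, so the inner expression collapses to $(\mu_\theta(X)-\E[Y\vert X])^2$.

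To finish, I would exploit that $\mu_\theta$ is constant on the equivalence class $[x]_c$, so $\mu_\theta(X)=\mu_\theta(x)$ for every $X\in[x]_c$, and the mean-calibration identity~(\ref{eqn:calibration mean 2}) identifies this constant as $\E[\E[Y\vert X]\vert[x]_c]$. The outer conditional expectation therefore reduces to the expected squared deviation of $\E[Y\vert X]$ from its own conditional mean under $[x]_c$, which is by definition $\var[\E[Y\vert X]\vert[x]_c]$, yielding the claim.

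The main obstacle is nothing substantive; it is purely bookkeeping between the three layers of conditioning (the data distribution given $X$, the equivalence class $[x]_c$, and the deterministic dependence of $\mu_\theta$ on $[x]_c$). The only point requiring genuine care is that while $\mu_\theta$ is constant over $[x]_c$, the conditional mean $\E[Y\vert X]$ is in general not, so the squared bias does not trivially vanish; this is precisely what makes the final outer expectation nontrivial and equal to the grouping variance of the true posterior means.
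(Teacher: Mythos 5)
Your proof is correct and follows essentially the same route as the paper's: both start from the off-diagonal entry of the covariance-calibration identity, invoke the conditional independence of $Y_1$ and $Y_2$ given $X$, use the constancy of $\mu_\theta$ on $[x]_c$ together with the mean-calibration identity of Lemma~\ref{thm:lemma covariance calibration}, and recognize the result as $\var[\E[Y\vert X]\vert[x]_c]$. The only (cosmetic) difference is that you factorize the centered product $\E[(\mu_\theta(X)-Y_1)(\mu_\theta(X)-Y_2)\vert X]$ directly into a squared bias, whereas the paper expands it into four terms before applying independence to $\E[Y_1Y_2\vert X]$; both are valid and your version is, if anything, slightly tidier.
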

\begin{proof}
    \begin{equation}
    \begin{aligned}
        \cov_\theta(x) &= \E[\E[(Y_1-\mu_{\theta}(X))(Y_2-\mu_{\theta}(X))\vert X]\vert [x]_c]&\text{Def.~\ref{def:covariance calibration}}\\
        &=\E[\E[Y_1Y_2 - Y_1\mu_{\theta}(X)-Y_2\mu_{\theta}(X)+\mu_{\theta}^2(X)\vert X]\vert [x]_c]\\
        &=\E[\E[Y_1Y_2\vert X]\vert [x]_c]- \mu_{\theta}(x)\E[\E[Y_1\vert X]\vert[x]_c]-\mu_{\theta}(x)\E[\E[Y_2\vert X]\vert[x]_c]+ \mu_{\theta}^2(x)&\mu_\theta \text{ is function of }[x]_c\\
        &=\E[\E^2[Y\vert X]\vert[x]_c]-\mu_\theta^2(x)&\text{Eq.~(\ref{eqn:mean of Y1 x Y2}) and Lemma~\ref{thm:lemma covariance calibration}}\\
        &=\E[\E^2[Y\vert X]\vert[x]_c] - \E^2[\E[Y\vert X]\vert [x]_c]&\text{Lemma~\ref{thm:lemma covariance calibration}}\\
        &=\var[\E[Y\vert X]\vert [x]_c]&\text{Def. of }\var
    \end{aligned}
\end{equation}
\end{proof}
This shows that the main theorems of the paper, namely Thm.~\ref{tmh: model moments} and Thm.~\ref{thm: model covariance}, still hold when using a weaker notion of calibration, namely variance (or covariance) calibration

\section{Outline of the necessary model modifications}

The model presented in this paper must be able to predict two outputs for every input, which must be potentially correlated. This rules out the possibility to just train two different models. In principle, it is possible to construct a model $p_\theta(y_1,y_2\vert x) = \mathcal N(\mu_\theta(x), \Sigma_\theta(x))$, where $\mu_\theta\in\mathbb R^2 = [\mu_{\theta,1},\mu_{\theta,2}]^\top$ and $\Sigma_\theta(x)\in\mathbb R^{2,2}$ with $\Sigma_{\theta,11}=\sigma_{\theta,1}^2$, $\Sigma_{\theta,22}=\sigma_{\theta,2}^2$ and $\Sigma_{\theta,12}=\Sigma_{\theta,21}=\sigma_{\theta,1}\sigma_{\theta,2}\rho_\theta$, where $\rho_\theta$ is the predicted correlation coefficient. This approach, however, is not the most practical, especially for distributions that are not Gaussian. That is why it can be desirable to decompose the joint distribution as $p_\theta(y_1,y_2\vert x)=p_\theta(y_2\vert x,y_1)\cdot p_\theta(y_1\vert x)$. This, however, requires the model to be able to accept an optional input, and behave differently accordingly. In~\cite{Durasov24a}, it is proposed to model $p_\theta(y_1\vert x)$ as $p_\theta(y_1\vert x, y_0)$, where $y_0$ is an uninformative constant specified as an hyperparameter. This way, the inference function of the model has to be modified only slightly, as show in Alg.~\ref{alg:original} and \ref{alg:zigzag}. 

\begin{minipage}{0.48\textwidth}
\begin{algorithm}[H]
\caption{Original Neural Network inference}
\label{alg:original}
\begin{algorithmic}[1]
\STATE \textbf{Input} input $x$
\STATE {\color{Gray} \# }
\STATE {\color{Gray} \# }
\STATE {\color{Gray} \# }
\STATE {\color{Gray} \# }
\STATE {\color{Gray} \# }
\STATE $x \gets \text{input}(x)$
\STATE $x \gets \text{hidden}(x)$
\STATE  \textbf{Return:} $\text{output}(x)$
\end{algorithmic}
\end{algorithm}
\end{minipage}
\hfill
\begin{minipage}{0.48\textwidth}
\begin{algorithm}[H]
\caption{Modified inference}
\label{alg:zigzag} 
\begin{algorithmic}[1]
\STATE \textbf{Input:} input $x$, constant $y_0$, feedback $y$ (optional)
\IF{$y$ is \text{None}}
    \STATE $B \gets x.\text{shape}[0]$
    \STATE $y \gets y_0 \cdot \text{ones}((B, \text{out\_features}))$
\ENDIF
\STATE $x \gets \text{concatenate}([x, y], \text{dim}=1)$
\STATE $x \gets \text{input}(x)$
\STATE $x \gets \text{hidden}(x)$
\STATE  \textbf{Return:} $\text{output}(x)$
\end{algorithmic}
\end{algorithm}
\end{minipage}

Having to fine tune the value of $y_0$ can be avoided by modifying the network by setting all the weight connected to the feedback neurons to 0, as shown in Alg~\ref{alg:new algo}.
\begin{algorithm}
\caption{Modified inference, no constant}
\label{alg:new algo}
\begin{algorithmic}[1]
\STATE \textbf{Input:} input $x$, feedback $y$ (optional)
\STATE $\text{feedback} \gets \text{False}$
\IF{$y$ is \text{None}}
    \STATE {\color{Gray} \# drop weights connected to $y$}
    \STATE $\text{feedback} \gets \text{True}$
    \STATE $\text{original\_weight} \gets \text{copy(input\_layer.weight))}$
    \STATE $\text{input\_layer.weight} \gets  \text{drop\_weights(input\_layer.weight)}$ 
\ENDIF
\STATE $x \gets \text{concatenate}([x, y], \text{dim}=1)$
\STATE $x \gets \text{input}(x)$
\STATE $x \gets \text{hidden}(x)$
\STATE $x\gets \text{output}(x)$

\IF{\text{feedback}}
\STATE {\color{Gray} \# restore original weights}
    \STATE$ \text{input\_layer.weight} \gets \text{original\_weight}$
    \ENDIF
\STATE  \textbf{Return:} $x$
\end{algorithmic}
\end{algorithm}

\section{Details of the physical experiments}
\subsection{Wind tunnel experiments}

\begin{figure}[!hbt]
    \centering
    \input{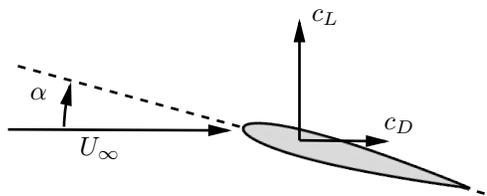}
    \caption{Schematics of the wind tunnel experiment.}
    \label{fig:schema wind tunnel}
\end{figure}

It is common practice in aerospace engineering to express the performances of an airfoil, i.e. a bidimensional section of a wing, in terms of its lift and drag coefficients, $c_L$ and $c_D$, defined as the ratio of the lift and drag forces (per unit length) and the dynamic pressure forces $1/2\rho c U_\infty^2 $, where $\rho$ is the fluid density and $c$ the profile chord. These forces can be measured by placing a maquette in a wind tunnel, where the flow speed can be controlled with precision, see Fig.~\ref{fig:schema wind tunnel} for a sketch of the main relevant quantities. The output of the force balances, once normalized, looks like Fig.~\ref{fig:dataset 250}. In black and white are indicated the points used for training, chosen independently placing a uniform distribution over the time series.
\begin{figure}[!hbt]
    \centering
    \includegraphics[width=0.6\linewidth]{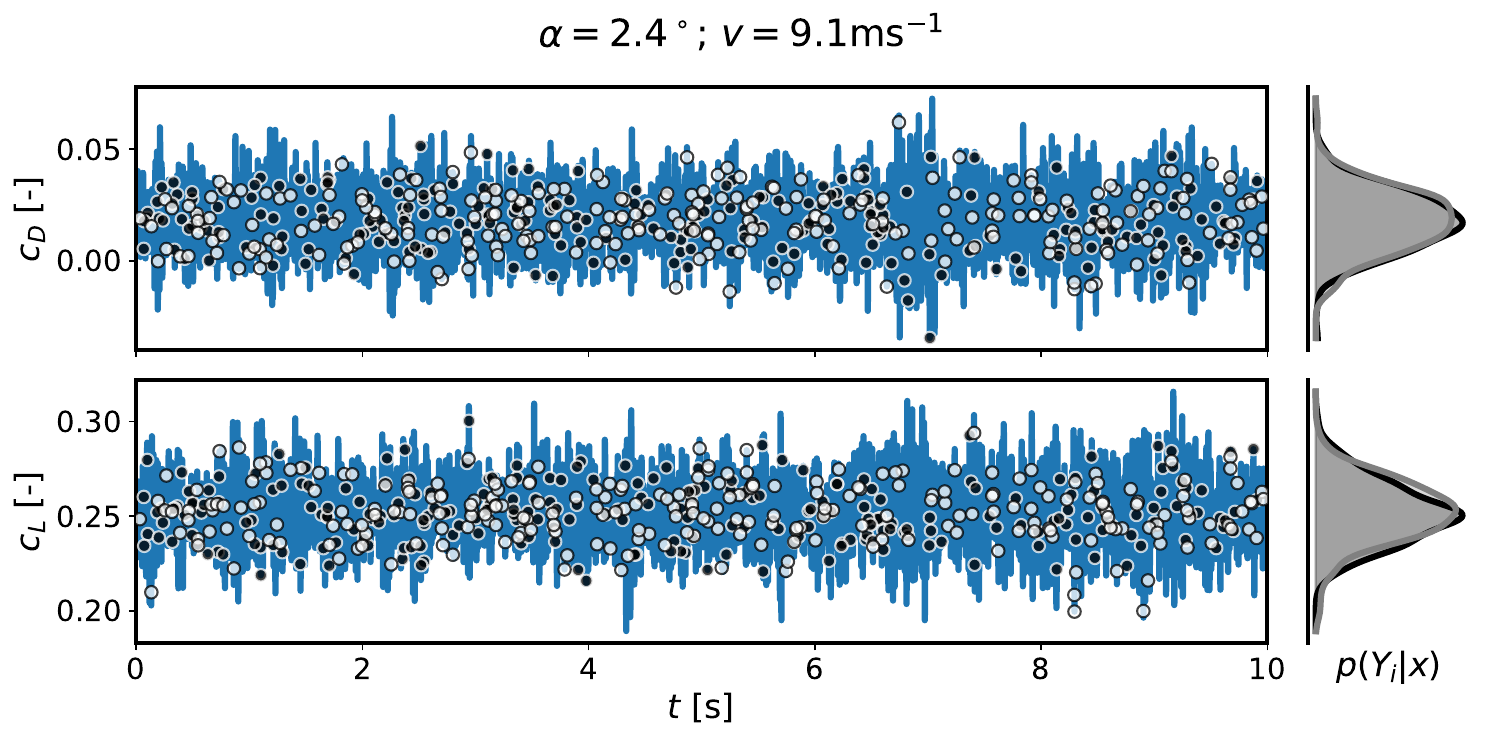}
    \caption{Normalized output of the force balances of the wind tunnel experiments. In black and white the samples used for modeling, which are chosen randomly with a uniform distribution over the set of all time samples.}
    \label{fig:dataset 250}
\end{figure}

\subsection{Drone noise}
The increased presence of small unmanned drones in daily life has revived the interest of the aeroacoustics community for rotor noise. The acoustic signature of a small rotor is mainly due to two effects: the random interactions of turbulent eddies with the blades, which gives rise to a broadband noise signature, and the rotation of the blades themselves, which produces sharp tones. The tones appear at the so-called blade passing frequency (BPF), i.e. the rotation speed of the motor $\Omega$ times the number of blades $n$, and all integer multiples of this fundamental frequency, called harmonics. Both these components are visible on the spectrum in Fig.~\ref{fig:spectrum}. Note that the extra peaks, not evidenced by vertical lines, are harmonics of the noise due to small imbalances of the fan system, and are thus not of aerodynamic nature. The tonal component of noise is the most annoying for the human ears, but is also hard to predict with precision because it depends on the unsteady pressure distribution on the blades, which require extremely costly numerical simulations to be captured efficiently.
\begin{figure}[!hbt]
    \centering
    \includegraphics[width=0.7\linewidth]{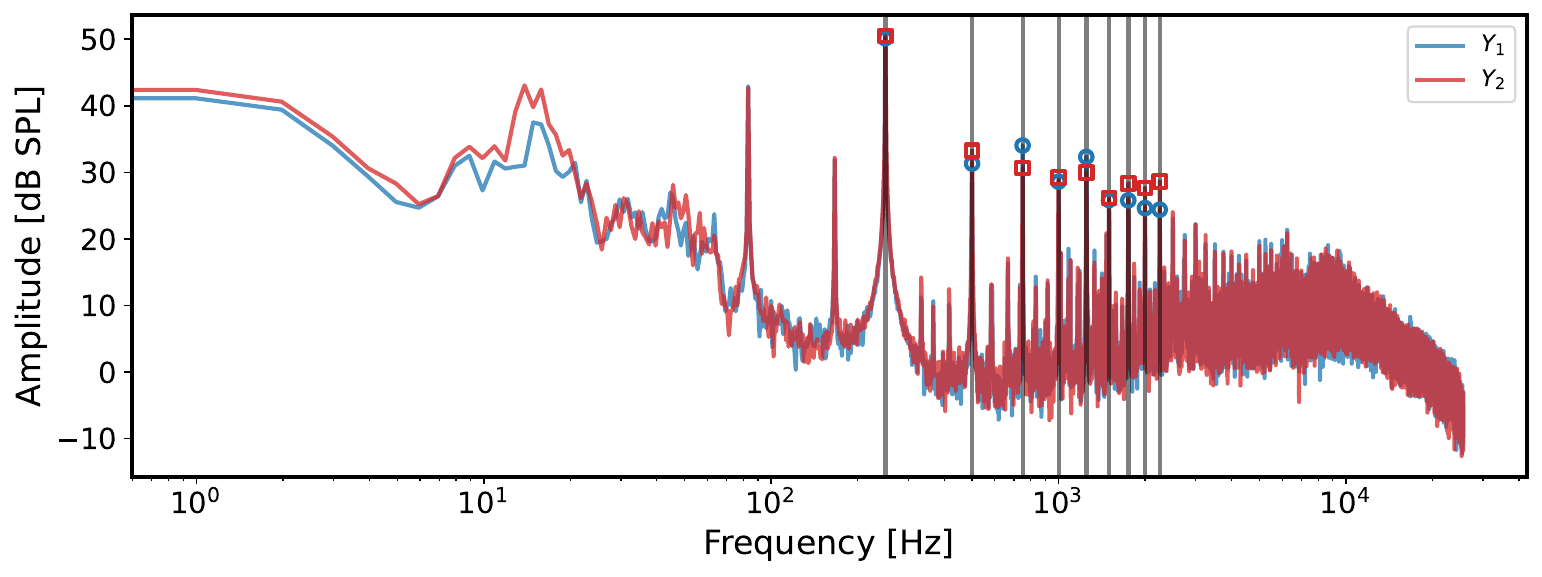}
    \caption{Spectrum captured at a microphone placed on the rotor-disk plane ($\vartheta=0^\circ$), for a three-bladed rotor spinning at $\Omega=3000$ rpm.}
    \label{fig:spectrum}
\end{figure}
The data has been collected in the anecho\"ic room of ISAE-SUPAERO, and are available on the Dataverse, along with scripts to perform the data processing. For the present study, the starting point are again the time series of the farfield sound pressure. The signal is split into two sub-sections, both of which are processed using the fast Fourier transform, averaging the results of 8 windowing sections, with no overlap, using the Hanning window. It is fair to consider the two halves as independent realisations of the same data distribution, by considering the time signal to be an ergodic random process. 

\end{document}